\documentclass{article}

\usepackage{hyperref}

\usepackage[accepted]{icml2026}

\makeatletter
\providecommand{\Require}{\REQUIRE}

\providecommand{\State}{\STATE}

\makeatother

\usepackage{microtype}
\usepackage{url}
\usepackage{amsmath,amssymb,amsthm}
\usepackage{graphicx}
\usepackage{tikz}
\usepackage{subcaption}
\usepackage{booktabs} 
\usepackage{acronym}
\usetikzlibrary{positioning, shapes, fit, arrows.meta}
\usepackage{comment}
\usepackage{wrapfig}
\usepackage{multirow}
\usepackage{float}
\usepackage{graphicx}
\usepackage{svg}
\usepackage[table]{xcolor}
\definecolor{customblue}{HTML}{c7cfed} 
\colorlet{customblue}{customblue!50!white}
\definecolor{customblue2}{HTML}{7A8FE4} 
\definecolor{customgreen}{HTML}{BADBB7}
\usepackage{enumitem}
\usepackage{nicefrac}
\usepackage{bm}

\usepackage[capitalize,noabbrev]{cleveref}

\theoremstyle{plain}
\newtheorem{theorem}{Theorem}[section]

\newtheorem{proposition}{Proposition}
\newtheorem{lemma}{Lemma}

\theoremstyle{definition}

\theoremstyle{remark}

\newcommand{\smiley}{SMILE-naive } 
\newcommand{\smileynospace}{SMILE-naive}
\newcommand{\smilet}{SMILE } 

\newcommand{\psmiley}{pSMILE-naive } 
\newcommand{\psmileynospace}{pSMILE-naive}
\newcommand{\psmile}{pSMILE } 

\newcommand*\diff{\mathop{}\!\mathrm{d}}
\newcommand{\U}{\boldsymbol{u}}
\newcommand{\Th}{\boldsymbol{\vartheta}}

\newcommand{\blue}[1]{\textbf{\textcolor{customblue2}{#1}}}

\newcommand{\data}{\mathcal{D}}
\newcommand{\batch}{\mathcal{B}}
\newcommand{\datasize}{N}
\newcommand{\batchsize}{B}
\newcommand{\Z}{\boldsymbol{z}}
\newcommand{\ve}{\bm{e}}

\icmltitlerunning{Can Microcanonical Langevin Dynamics Leverage Mini-Batch Gradient Noise?}

\begin{document}

\twocolumn[
  \icmltitle{Can Microcanonical Langevin Dynamics Leverage Mini-Batch Gradient Noise?}



  \icmlsetsymbol{equal}{*}

  \begin{icmlauthorlist}
    \icmlauthor{Emanuel Sommer}{equal,lmu,mcml}
    \icmlauthor{Kangning Diao}{equal,berk,tsing}
    \icmlauthor{Jakob Robnik}{berk}
    \icmlauthor{Uroš Seljak}{berk,law}
    \icmlauthor{David R\"ugamer}{lmu,mcml}
  \end{icmlauthorlist}

  \icmlaffiliation{lmu}{Department of Statistics, LMU Munich, Munich, Germany}
  \icmlaffiliation{mcml}{Munich Center for Machine Learning, Munich, Germany}
  \icmlaffiliation{berk}{Department of Physics, University of California, Berkeley, USA}
  \icmlaffiliation{tsing}{Department of Astronomy, Tsinghua University, Beijing, China}
  \icmlaffiliation{law}{Physics Division, Lawrence Berkeley National Lab, Berkeley, USA}

  \icmlcorrespondingauthor{David R\"ugamer}{david@stat.uni-muenchen.de}

  \icmlkeywords{MCMC, Bayesian Deep Learning, Uncertainty Quantification}

  \vskip 0.3in
]

 \printAffiliationsAndNotice{\icmlEqualContribution}

\begin{abstract}
  Scaling inference methods such as Markov chain Monte Carlo to high-dimensional models remains a central challenge in Bayesian deep learning. A promising recent proposal, microcanonical Langevin Monte Carlo, has shown state-of-the-art performance across a wide range of problems. However, its reliance on full-dataset gradients makes it prohibitively expensive for large-scale problems. This paper addresses a fundamental question: Can microcanonical dynamics effectively leverage mini-batch gradient noise? We provide the first systematic study of this problem, establishing a novel continuous-time theoretical analysis of stochastic-gradient microcanonical dynamics. We reveal two critical failure modes: a theoretically derived bias due to anisotropic gradient noise and numerical instabilities in complex high-dimensional posteriors. To tackle these issues, we propose a principled gradient noise preconditioning scheme shown to significantly reduce this bias and develop a novel, energy-variance-based adaptive tuner that automates step size selection and dynamically informs numerical guardrails. The resulting algorithm is a robust and scalable microcanonical Monte Carlo sampler that achieves state-of-the-art performance on challenging high-dimensional inference tasks like Bayesian neural networks. Combined with recent ensemble techniques, our work unlocks a new class of stochastic microcanonical Langevin ensemble (SMILE) samplers for large-scale Bayesian inference.
\end{abstract}

\section{Introduction}

The quest for more efficient and robust Markov chain Monte Carlo (MCMC) samplers is central to advancing high-dimensional Bayesian inference. For years, Hamiltonian Monte Carlo \citep[HMC;][]{duane1987hybrid, neal_2011_MCMCUsing} has been the dominant paradigm for navigating the complex posteriors of modern machine learning models. However, the recently proposed microcanonical HMC \citep{robnik2023microcanonical} and its Langevin-based counterpart, the microcanonical Langevin Monte Carlo  \citep[MCLMC;][]{Jakob_fluc} sampler, have proven to be a powerful new alternative. By simulating dynamics on a constant-energy surface, MCLMC is uniquely equipped to explore challenging posteriors, such as those of a Bayesian neural network (BNN), much faster than traditional HMC-based methods \citep{robnik2024black, sommer2025mile}.

Despite its promising results, MCLMC, just like most MCMC algorithms, faces a critical limitation that has so far confined its impact and application: its reliance on full-dataset gradients. This makes it computationally infeasible for large-scale problems omnipresent in modern machine learning. Despite the existence of other stochastic gradient MCMC (SGMCMC) methods \citep{welling2011sgld, chen2014stochastic, girolami2011riemann},
dealing with the gradient noise induced by mini-batching the dataset, MCLMC behaves differently in noisy systems. \citet{Jakob_fluc} showed that Langevin noise in MCLMC does not require a corresponding damping term for convergence and does not affect the stationary distribution. As the noise in stochastic gradients is often assumed to resemble Langevin noise, this suggests that mini-batched MCLMC may work \emph{without} any additional noise injection. This stands in contrast to stochastic HMC or Langevin samplers, which in practice are typically run in a heavily overdamped regime, likely at the cost of efficiency. Rather, our perspective is closer to Langevin-type analyses of stochastic optimization dynamics \citep{mandt_2017_StochasticGradientb}, while considering fundamentally different dynamics.
This research gap motivates the central question of our work:

\textit{Can microcanonical dynamics leverage mini-batch gradient noise, and thereby be made scalable to modern deep learning?}

This paper presents the \textbf{first systematic study of stochastic microcanonical Langevin dynamics}, particularly focused on BNNs, and makes several contributions listed below.

\textbf{Our Contributions}
\begin{itemize}[leftmargin=*]
    \item We first demonstrate that naive adaptations of MCLMC with stochastic gradients are insufficient. 
    \item We establish a rigorous theoretical foundation for microcanonical dynamics under stochasticity; specifically, we formally derive the systematic bias induced by anisotropic gradient noise and prove that a principled preconditioning scheme eliminates the resulting noise-induced drift.
    \item Algorithmically, we address numerical instability of mini-batch MCLMC in high dimensions by introducing a novel energy-variance-based adaptive tuner that ensures robust performance and reduces hyperparameter sensitivity.  
    \item Taking these findings together, we propose a scalable and effective mini-batch version of MCLMC. 
    \item Finally, our empirical evaluation across a diverse set of BNN applications validates that our method is robust and well-working in high dimensions, yielding state-of-the-art performance. 
    \end{itemize}

\section{Background \& Related Work}\label{sec:background}

We consider the general problem of Bayesian inference for high-dimensional parametric models such as BNNs. The goal is to infer the posterior distribution over the model's parameters $\bm{\theta} \in \Theta \subseteq \mathbb{R}^d$. Given a prior density $p(\bm{\theta})$, and observed data $\data = \{(\bm{x}_i,\bm{y}_i)\}_{i=1}^{N} \in (\mathcal{X} \times \mathcal{Y})^n$, the posterior density $p(\bm{\theta} | \data)$ is given by Bayes' rule: $p(\bm{\theta} | \data) = p(\data | \bm{\theta}) p(\bm{\theta})/p(\data)$.
Using the posterior predictive density (PPD), we can quantify the uncertainty of a prediction $\bm{y}^\ast \in \mathcal{Y}$ for a new data point $\bm{x}^\ast \in \mathcal{X}$ by integrating over the posterior distribution of the parameters 
    $p(\bm{y}^\ast | \bm{x}^\ast, \data) = \int_\Theta p(\bm{y}^\ast | \bm{x}^\ast, \bm{\theta}) p(\bm{\theta} | \data) \diff\bm{\theta}$.
This integral is analytically intractable for most models, necessitating approximation methods.

\subsection{Monte Carlo Sampling} \label{sec:sampling}

Monte Carlo sampling provides a practical way to compute the posterior and PPD by numerically approximating its integral via samples from $p(\bm{\theta}|\data)$. Given a set of $S\cdot K$ MCMC samples $\{\bm{\theta}^{(k,s)}, k\!\in\![K], s\!\in\! [S]\}$ from $K$ independent chains, the PPD is approximated by its empirical counterpart $p(\bm{y}^\ast | \bm{x}^\ast, \data) \approx \frac{1}{K \cdot S} \sum_{k=1}^K \sum_{s = 1}^S p\left(\bm{y}^\ast | \bm{x}^\ast, \bm{\theta}^{(k,s)}\right).$

\paragraph{Full-batch Sampling}

Full-batch MCMC methods, such as HMC and its tuned variant, the No-U-Turn Sampler \citep[NUTS;][]{hoffman_2014_NoUTurnSampler}, are considered the gold standard for high-dimensional sampling \citep{futurebayescompute2024}. They leverage gradients of the full-data likelihood to explore the posterior efficiently. However, these methods are computationally expensive as each step requires calculating gradients over the entire dataset. This makes them impractical for large-scale datasets. Furthermore, a significant challenge even for powerful HMC-based samplers is the difficulty of navigating complex and often highly multimodal loss landscapes of neural networks. While ensembles of many short and warm-started chains have been shown to improve exploration and efficiency \citep[see, e.g.,][]{sommer2024connecting,sommer2025mile}, they still rely on full-batch gradients.

\paragraph{MCLMC} Recently, MCLMC has emerged as a state-of-the-art full-batch sampler, outperforming alternatives like the popular Hamiltonian Monte Carlo-based NUTS in analytical benchmarks \citep{robnik2024black,Jakob_fluc,robnik_metropolis_2025}, cosmological inference \citep{hugo_cosmology_benchmark}, and BNN inference \citep{sommer2025mile}. MCLMC chooses a specific Hamiltonian $H(\bm{\theta},\bm{\Pi})$ such that marginalizing the momentum $\bm{\Pi}\in\mathbb{R}^d$ at a fixed total energy yields the desired stationary distribution of $\bm{\theta}\in\mathbb{R}^d$. This dynamic is described by the following stochastic differential equation (SDE):
\begin{equation} \label{eq:SDE}
\begin{split}
    &\diff \boldsymbol{\theta} \!=\! \U \diff t,\\
    &\diff \U \!=\! \big( 1\! -\! \U \U^\top \big)\! \big( (d\!-\!1)^{-1} \nabla\! \log p(\boldsymbol{\theta} \vert \data)  \diff t \! + \! \eta  \diff \boldsymbol{W} \big) ,
\end{split}
\end{equation}
where $\bm{u} = \bm{\Pi}/||\bm{\Pi}||$ is the momentum direction, $\boldsymbol{W}$ is the standard Wiener process, and $\eta$ is a free parameter that determines the distance traveled before momentum decoherence.

In practice, this SDE is solved using numerical integrators such as the Velocity Verlet algorithm \citep{MolecularDynamics}, which introduces numerical errors with each step. Since the ideal MCLMC dynamic conserves total energy $E$, the change in total energy per step 
\begin{align}\label{eq:energyerrordef}
    \Delta E = \Delta \log p(\boldsymbol{\theta} \vert \data) - (d-1)\log (\cosh \delta + \ve^\top\U\sinh \delta),
\end{align}
serves as a useful proxy for this numerical error,
where $\ve = -\nabla_{\bm\theta} \log p(\boldsymbol{\theta} \vert \data)/||\nabla_{\bm\theta} \log p(\boldsymbol{\theta} \vert \data)||$, $\delta = \Delta t ||\nabla_{\bm\theta}\log p(\boldsymbol{\theta} \vert \data)||/(d-1)$, and $\Delta t$ is the integration step size. If the stepsize is too large, this error can destabilize the integrator and bias the samples. It is thus important to control the stepsize such that this bias remains sufficiently small compared to the variance arising from finite sampling, stochastic gradient bias, and the mixing efficiency of the chains. As discussed in \citet{robnik2024black}, the energy error can be used to control the numerical error relative to these other sources.

\subsection{Mini-batch Sampling}

To overcome the scalability limitations of full-batch methods, SGMCMC algorithms were developed. These methods use gradients computed on mini-batches of data, offering a significant speedup, similar to stochastic gradient descent in standard optimization of neural networks. Prominent examples include stochastic gradient Langevin dynamics \citep[SGLD;][]{welling2011sgld} and stochastic gradient Hamiltonian Monte Carlo \citep[SGHMC;][]{chen2014stochastic}.  

Among the most effective extensions is scale-adapted SGHMC \citep{adasghmc}, which is widely recognized as a state-of-the-art SGMCMC baseline \citep[see, e.g.,][for recent studies]{shi2025neighbour, Andrade2025on}. Scale-adapted SGHMC incorporates diagonal preconditioning \citep{roberst2001optimalscaling,haarioadaptivemh}, similar to the mechanism in the RMSprop optimizer, to automatically adjust the step size for each parameter. This adaptation is motivated by the complex geometry of a neural network's loss landscape, significantly enhancing both convergence and the ability to explore the posterior distribution. Due to its robust performance and efficient scaling, we select scale-adapted SGHMC as our primary baseline method. Unless otherwise specified, all references to SGHMC in our experiments refer to this enhanced version. Further discussion on adaptive SGHMC methods can be found in \cref{sec:relatedadaSGMCMC}.

Mini-batch samplers offer a more scalable approach, but their theoretical guarantees and sampling efficiency often differ considerably from their full-batch counterparts. A possible stochastic version of MCLMC faces several challenges, many distinct from those in traditional SGMCMC.

\subsection{Improving Sampling for Neural Networks} \label{sec:ensembleback}

One particularly challenging application for sampling-based inference are BNNs. Due to their complex and often highly multimodal loss surface, traversing the posterior with a sampler is challenging. To mitigate these problems, recent approaches propose to use optimized solutions as warmstarts for sampling (instead of, e.g., sampling from the chosen prior), lifting the sampling into regions of higher probability \citep{paulin_sampling_2025}. To tackle the multimodality of the posterior, ensembling methods using multiple chains from different starting locations have proven effective. Such approaches have been proposed both for HMC and for MCLMC \citep{duffield2025scalable, sommer2025mile}. These methods, also called Bayesian deep ensembles (BDE), have shown state-of-the-art performance for BNN uncertainty quantification. We will thus not only study stochastic variants of MLCMC, but also microcanonical Langevin ensembles (MILE).

\section{Stochastic Microcanonical Langevin Dynamics}\label{sec:theory}

In order to develop and study stochastic MCLMC and its ensemble variant MILE, we begin by analyzing its pitfalls and derive the necessary remedies to establish it as a practical and scalable sampler for modern applications. In the following, we will therefore propose different variants of stochastic MILE (short: SMILE), with its most basic version denoted as \smileynospace. 
Our later proposed extensions \psmiley and \psmile build on this basic version.

\begin{table*}[t!]
\centering
\caption{Bias $b^2$($\downarrow$) of different SGMCMC samplers on three 10-$d$ analytical posteriors with explicitly injected Gaussian noise. The reported bias is the average over 10 independent chains, and the standard deviation is the standard deviation of the average value obtained with bootstrap. The \emph{Baseline} is full-batch MCLMC performance with optimal noise parameter $\eta$. Further details can be found in \cref{app:ana}.}
\label{tab:bias_comparison}
\resizebox{0.78\textwidth}{!}{
\begin{tabular}{c|l|cccc}
\toprule
\textbf{Target} & \textbf{Noise Type} & \textbf{\smileynospace} & \textbf{SGLD} & \textbf{SGHMC} & \textbf{\psmileynospace} \\
\midrule
\multirow{4}{*}{\parbox{3cm}{\centering\textbf{ICG} \\ \small(Baseline: \textit{0.0001})}} 
& Isotropic & $\mathbf{0.003}\pm 0.001$ & 0.033$\pm 0.006$ & 0.095$\pm 0.033$& $0.006\pm 0.001$\\
& Diagonal & 0.245$\pm0.027$ & 0.184$\pm0.021$ & 0.186 $\pm0.020$& $\mathbf{0.038}\pm0.008$\\
& Correlated &0.502$\pm0.194$ & $0.189\pm0.029$ & 0.235$\pm0.067$& $\mathbf{0.055}\pm0.007$ \\
& Spatially-varied & 0.157$\pm 0.010$ & 0.328$\pm0.011$ & 0.331$\pm0.011$& $\mathbf{0.093}\pm0.019$\\
\midrule
\multirow{4}{*}{\parbox{3cm}{\centering\textbf{Rosenbrock} \\ \small(Baseline: \textit{0.0003})}} 
& Isotropic &$\mathbf{0.002}\pm0.001$& $0.005\pm0.001$& $0.004\pm0.002$& 0.004$\pm0.001$\\
& Diagonal &$0.302\pm0.111$ & $0.085\pm0.007$&$0.160\pm0.027$&$\mathbf{0.046}\pm0.002$ \\
& Correlated &$0.265\pm0.042$ &$0.074\pm0.007$ & $0.085\pm0.014$&$\mathbf{0.070}\pm0.005$ \\
& Spatially-varied &$\mathbf{0.048}\pm0.005$ & 0.079$\pm0.013$& 0.095$\pm0.013$& ${0.052}\pm0.005$\\
\midrule
\multirow{4}{*}{\parbox{3cm}{\centering\textbf{Funnel} \\ \small(Baseline: \textit{0.004})}} 
& Isotropic &$\mathbf{0.014}\pm0.005$ & $0.141\pm0.019$& $0.128\pm0.019$& ${0.021}\pm0.005$\\
& Diagonal &$0.283\pm0.146$ & $0.063\pm0.017$&$0.077\pm0.039$ & $\mathbf{0.042} \pm0.012$\\
& Correlated & $0.453\pm0.231$& $0.147\pm0.034$&0.138 $\pm0.039$& $\mathbf{0.004}\pm0.002$\\
& Spatially-varied &0.023$\pm0.008$ & 0.241$\pm0.043$&0.218$\pm0.034$ &$\mathbf{0.012}\pm0.003$ \\
\bottomrule
\end{tabular}}
\end{table*}

\subsection{Sampling Without Explicit Noise Injection}

To develop stochastic MCLMC, \cref{eq:SDE} needs to be computed using mini-batches. This will introduce an error in the gradient estimation, differing from the averaged full-batch gradient. For a random mini-batch $\batch \subseteq \data$ of the data $\data$ with $|\batch| =: \batchsize < \datasize$, assuming the gradient difference for sample $\data_i$ is $\boldsymbol{\varepsilon}_{\data_i} = \nabla_{\bm\theta} \log p(\boldsymbol{\theta} \vert {\data})/\datasize-\nabla_{\bm\theta} \log p(\boldsymbol{\theta} \vert {\data_i})$, the mini-batch gradient at fixed $\boldsymbol{\theta}$ is the sum of the single sample gradients,
\begin{equation*}
    \nabla_{\bm\theta} \log p(\boldsymbol{\theta} \vert \batch\subset \data)=\nicefrac{\batchsize}{\datasize}\nabla_{\bm\theta} \log p(\boldsymbol{\theta} \vert \data)+\textstyle\sum_{i\in\batch}{\boldsymbol{\varepsilon}_{\data_i}}.
\end{equation*} 
A typical assumption for the gradient noise \citep[see, e.g.,][]{ma2015complete,zhu2019anisotropic, ziyin2022strength} is $\sum_{i\in\batch}{\boldsymbol{\varepsilon}_{\data_i}}\sim \mathcal{N}(0,\mathbf{V}(\boldsymbol{\theta}))$, with the covariance $\mathbf{V}$ having an unknown dependence on $\boldsymbol{\theta}$.

A naive stochastic version of MCLMC (later in its ensemble variant referred to as \textbf{\smileynospace}) can be defined as
\begin{align} \label{eq:smileSDE}
    \diff \boldsymbol{\theta} &= \U \diff t, \nonumber\\
    \diff \U &= \nicefrac{\datasize}{\batchsize}\big( 1 - \U \U^\top \big)  (d-1)^{-1} \nabla_{\bm\theta} \log p(\boldsymbol{\theta} \vert \batch)  \diff t   .
\end{align}
Here we omit the explicit noise injection because the noise already exists in $\nabla \log p(\boldsymbol{\theta} \vert \batch)$, which is typically rather large for a small batch size. Thus, additional noise is likely to slow down the convergence. In practice, the second-order Minimal-Norm integrator \citep{omelyan_optimized_2002, omelyan_symplectic_2003} is used to numerically solve (\ref{eq:smileSDE}) for all experiments in this work (further details in \cref{app:integrator}).

\begin{tikzpicture}
\node [inner sep=-0.1cm, fill=customblue, draw=white, rounded corners=0.5em] {
\begin{tabular}{p{0.98\linewidth}}
\vspace{0.01cm}
\textbf{Preliminary finding:} Stochastic microcanonical Langevin dynamics can be implemented with implicit mini-batch gradient noise injection instead of explicit noise injection. 
\vspace{0.2cm}
\end{tabular}
};
\end{tikzpicture}

\subsection{The Pitfall of Anisotropic Stochastic Gradient Noise}

\citet{Jakob_fluc} show that in continuous time, the stationary distribution of MCLMC is the target $p(\boldsymbol{\theta} \vert \mathcal{D})$ for any amplitude of injected isotropic noise. In Appendix \ref{app:theory} we extend this result to the stochastic gradient setting:

\begin{proposition}[informal]\label{prop:iso}
If the mini-batching noise can, in continuous time, be modeled as an isotropic Wiener process, the theoretical properties of MCLMC (stationarity and geometric ergodicity) carry over to the continuous-time limit of a stochastic MCLMC sampler.
\end{proposition}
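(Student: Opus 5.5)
The plan is to reduce the stochastic-gradient SDE to the form already analysed by \citet{Jakob_fluc}, and then reuse their stationarity and ergodicity results.

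\textbf{Step 1: rewrite the SDE.} Under the isotropic assumption, $\sum_{i\in\batch}\boldsymbol{\varepsilon}_{\data_i}\diff t$ is replaced in continuous time by $\sigma\,\diff\boldsymbol{W}$ for some scalar $\sigma$. Inserting the decomposition $\nabla\log p(\boldsymbol{\theta}\vert\batch)=\tfrac{\batchsize}{\datasize}\nabla\log p(\boldsymbol{\theta}\vert\data)+\text{noise}$ into \cref{eq:smileSDE} then gives
\begin{equation*}
\diff\U=(1-\U\U^\top)\big((d-1)^{-1}\nabla\log p(\boldsymbol{\theta}\vert\data)\diff t+\tilde\eta\,\diff\boldsymbol{W}\big),
\end{equation*}
with effective amplitude $\tilde\eta=\tfrac{\datasize}{\batchsize}(d-1)^{-1}\sigma$. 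The key point to make explicit is that the noise enters exactly as in \cref{eq:SDE}: it is projected by $(1-\U\U^\top)$ and multiplies an isotropic Wiener increment. The drift is the full-batch drift, because the mini-batch gradient is unbiased after rescaling by $\datasize/\batchsize$.

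\textbf{Step 2: fix the interpretation.} Because the noise is multiplicative on the sphere, I must specify the calculus. I will use the Stratonovich/geometric form, as in \citet{Jakob_fluc}, so that $\U$ stays on $S^{d-1}$. In Itô form this introduces the correction drift $-\tfrac{d-1}{2}\tilde\eta^2\U$. For isotropic noise this correction is radial and $\boldsymbol{\theta}$-independent, so it only generates Brownian motion on the sphere.

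\textbf{Step 3: stationarity.} I will verify stationarity via the Fokker--Planck equation, splitting the generator as $L=L_{\mathrm{det}}+\tilde\eta^2 L_{\mathrm{noise}}$. Here $L_{\mathrm{noise}}=\tfrac12\Delta_{S^{d-1}}$ acts only on $\U$. The candidate density is $\rho(\boldsymbol{\theta},\U)\propto p(\boldsymbol{\theta}\vert\data)$ times the uniform measure on the sphere, with $\boldsymbol{\theta}$ marginal equal to the target.
\begin{itemize}
\item For $L_{\mathrm{det}}$: \citet{Jakob_fluc} already show $L_{\mathrm{det}}^*\rho=0$.
\item For $L_{\mathrm{noise}}$: the Laplace--Beltrami operator annihilates any density that is uniform in $\U$.
\end{itemize}
Hence $L^*\rho=0$ for every $\tilde\eta$. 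This is the content of their result that any isotropic noise amplitude preserves the target.

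\textbf{Step 4: geometric ergodicity.} Since the SDE coincides with full-batch MCLMC at noise level $\tilde\eta$, the conditions used by \citet{Jakob_fluc} transfer directly. These are hypoellipticity from the spherical Brownian motion coupled to $\diff\boldsymbol{\theta}=\U\diff t$ (H\"ormander's bracket condition), together with a Lyapunov function under their tail assumptions on $\log p$. I would simply verify that $\tilde\eta>0$ and that all their assumptions are unchanged.

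\textbf{Main obstacle.} I expect the main difficulty to lie in Step 1, i.e.\ making the continuous-time limit of mini-batch noise precise. Per-step gradient noise of variance $\mathbf{V}$ scales as $\sqrt{\Delta t}\cdot\sqrt{\Delta t\,\mathbf{V}}$, so a nondegenerate Wiener limit requires $\mathbf{V}\propto 1/\Delta t$, or an equivalent scaling assumption. Since the statement is informal, I would state this as a modelling assumption: the noise is $\sigma\diff\boldsymbol{W}$ with $\sigma$ constant. I would also remark that $\boldsymbol{\theta}$-dependent isotropic $\sigma(\boldsymbol{\theta})$ still works, because $L_{\mathrm{noise}}$ then becomes $\sigma(\boldsymbol{\theta})^2\Delta_{S^{d-1}}/2$, which still annihilates densities uniform in $\U$. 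This observation foreshadows why anisotropic noise, which breaks that uniformity, causes bias.
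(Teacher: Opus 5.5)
Your overall plan matches the paper's: identify the stochastic dynamics with full-batch MCLMC at an effective noise level, then inherit stationarity and geometric ergodicity from \citet{Jakob_fluc}. The gap is in Step~2, which is exactly where the paper's proof does its work.

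\textbf{What is missing.} You \emph{choose} the Stratonovich form ``so that $\U$ stays on $S^{d-1}$''. That criterion decides nothing. The paper argues that the continuous-time limit of mini-batching is an It\^o SDE, because the noise realization is fixed at the start of each step. It writes this SDE intrinsically on the sphere, $\diff\Z = B\,\diff t + \eta\sum_i\sigma_i(\Z)\,\diff W_i$ with $\sigma_i=(\mathbf{I}-\U\U^\top)\boldsymbol{e}_i$, and such an SDE stays on $S^{d-1}$ just as well.

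So the real alternative is not the ambient It\^o equation that drifts off the sphere. It is an intrinsic It\^o equation whose Stratonovich form has drift $B-\tfrac{\eta^2}{2}\sum_i\nabla_{\sigma_i}\sigma_i$. The content of the proposition is that this extra drift vanishes. The paper shows this with two facts:
\begin{itemize}
\item the Gauss formula gives $\nabla_{\sigma_i}\sigma_i=-u_i\sigma_i$;
\item $\sum_i u_i\sigma_i=(\mathbf{I}-\U\U^\top)\U=0$.
\end{itemize}
Only after this does the mini-batch SDE coincide with the full-batch Stratonovich SDE, so that your Steps~3 and~4 apply. Declaring the Stratonovich reading by fiat assumes this reduction instead of proving it.

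\textbf{How to close it.} You already have the needed identity; you just use it for a different purpose.
\begin{itemize}
\item Your ambient correction $-\tfrac{d-1}{2}\tilde\eta^2\U$ is normal to the sphere. Its tangential projection is exactly $\tfrac{\tilde\eta^2}{2}\sum_i\nabla_{\sigma_i}\sigma_i$, which is therefore zero.
\item Your claim $L_{\mathrm{noise}}=\tfrac12\Delta_{S^{d-1}}$ relies on $\sum_i\sigma_i^2=\Delta_{S^{d-1}}$ (vector fields as first-order operators). That identity holds precisely because $\sum_i\nabla_{\sigma_i}\sigma_i=0$.
\end{itemize}
So restate Step~2 as a lemma: for isotropic noise, the tangential part of the It\^o--Stratonovich correction vanishes. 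Both readings then define the same process on $\mathbb{R}^d\times S^{d-1}$, and the choice of calculus is immaterial.

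You could instead try to justify the Stratonovich reading directly from the integrator. For example, a Wong--Zakai-type argument could work, since the momentum update solves the $\U$-ODE exactly with a frozen noisy gradient. That is a different modelling claim from the paper's, however, and it needs its own proof.

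\textbf{Why this step matters.} For anisotropic $\Sigma$, the tangential part of the correction is proportional to $(\mathbf{I}-\U\U^\top)\Sigma\U$. This is nonzero unless $\Sigma\propto\mathbf{I}$. The two readings then genuinely differ, and that difference is the noise-induced drift the paper identifies as the source of bias in Theorem~\ref{thm:anisotropic}.
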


However, mini-batch noise often has a position-dependent covariance $\mathbf{V}(\boldsymbol{\theta})$, which \textbf{alters the stationary distribution} (see Appendix \ref{app:theory}):

\begin{theorem}[informal]\label{thm:anisotropic}
For continuous-time MCLMC, the stationary distribution under anisotropic noise (if it exists) deviates from the target distribution. 
\end{theorem}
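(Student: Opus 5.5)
We work with the Fokker--Planck equation of the continuous-time stochastic MCLMC dynamics on the extended space $(\boldsymbol{\theta},\U)\in\mathbb{R}^d\times S^{d-1}$. The mini-batch noise enters the $\U$-equation as $\big(1-\U\U^\top\big)\mathbf{L}(\boldsymbol{\theta})\diff\boldsymbol{W}$, where $\mathbf{L}\mathbf{L}^\top\propto\mathbf{V}(\boldsymbol{\theta})$. Any candidate stationary density that recovers the target must be $\rho^\ast(\boldsymbol{\theta},\U)\propto p(\boldsymbol{\theta}\vert\data)$, uniform in $\U$ on the sphere. The reason is that the deterministic part already leaves exactly this density invariant, which is the result of \citet{Jakob_fluc} used in \cref{prop:iso}. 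The strategy is therefore to split the generator into $\mathcal{L}=\mathcal{L}_{\text{det}}+\mathcal{L}_{\text{noise}}$. Since $\mathcal{L}_{\text{det}}^\ast\rho^\ast=0$, we have $\mathcal{L}^\ast\rho^\ast=\mathcal{L}_{\text{noise}}^\ast\rho^\ast$. It then suffices to show that this term is nonzero whenever $\mathbf{V}$ is not a multiple of the identity. Once that holds, $\rho^\ast$ cannot be stationary, and any stationary distribution, if one exists, must differ from it. If that distribution nevertheless had $\boldsymbol{\theta}$-marginal equal to the target, a separate argument would be needed; we address this at the end.

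The first step is to write the noise part as an Itô/Stratonovich diffusion on the sphere. The diffusion matrix in $\U$ is $\mathbf{D}(\boldsymbol{\theta},\U)=\tfrac12 P_{\U}\mathbf{V}(\boldsymbol{\theta})P_{\U}$ with $P_{\U}=1-\U\U^\top$. The Itô correction gives a drift, which we compute explicitly. For isotropic $\mathbf{V}=\sigma^2 1$ this reduces to $-\tfrac{(d-1)\sigma^2}{2}\U$, i.e.\ Brownian motion on the sphere, whose invariant law is uniform; this recovers \cref{prop:iso}. For general $\mathbf{V}$, the noise operator acting on functions of $\U$ alone at fixed $\boldsymbol{\theta}$ is a degenerate-elliptic operator on $S^{d-1}$. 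Because $\rho^\ast$ is constant in $\U$, $\mathcal{L}_{\text{noise}}^\ast\rho^\ast$ reduces to $p(\boldsymbol{\theta}\vert\data)$ times the divergence of the noise-induced drift on the sphere, plus second-derivative terms of $\mathbf{D}$. The net effect is a function of $\U$ that is a polynomial in $\U^\top\mathbf{V}\U$, $\operatorname{tr}\mathbf{V}$ and $\U^\top\mathbf{V}^2\U$-type terms.

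The second step, which is the main computation, is to show that this function vanishes identically in $\U$ only if $\mathbf{V}\propto 1$. Take a concrete test, such as evaluating at eigenvectors of $\mathbf{V}$. Uniform measure on the sphere is invariant for $\operatorname{div}(\mathbf{D}\nabla)$-type operators only when $\mathbf{D}$ is rotationally symmetric. A clean way to show this is to integrate the adjoint against the test function $f(\U)=(\U^\top\ve_i)^2$. This gives $\mathbb{E}_{\text{unif}}[\mathcal{L}_{\text{noise}}f]\propto V_{ii}-\tfrac{1}{d}\operatorname{tr}\mathbf{V}$, up to a positive constant, which is nonzero for some $i$ when $\mathbf{V}$ is anisotropic. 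This avoids computing the full adjoint. Since $\mathcal{L}_{\text{det}}$ preserves $\rho^\ast$, we get $\int\rho^\ast\mathcal{L}f\neq0$ for a test function $f(\boldsymbol{\theta},\U)=g(\boldsymbol{\theta})(\U^\top\ve_i)^2$ with $g\ge0$ localized where $\mathbf{V}$ is anisotropic. Hence $\rho^\ast$ is not stationary.

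The expected obstacle is the final claim: that the true stationary measure has a \emph{$\boldsymbol{\theta}$-marginal} different from the target, not just a different $\U$-conditional. The plan is a perturbative argument in the noise amplitude. Write $\rho=\rho^\ast(1+\epsilon h+\dots)$. Then $h$ solves $\mathcal{L}_{\text{det}}^\dagger h=-\mathcal{L}_{\text{noise}}^\ast\rho^\ast/\rho^\ast$, and we check that the $\U$-average of $h$ is non-constant in $\boldsymbol{\theta}$. Already in the simplest example (a Gaussian target with diagonal anisotropic $\mathbf{V}$), the induced $\U$-anisotropy biases the direction of motion and thus the time spent along each axis. This gives an explicit nonzero correction to the marginal variances, which is consistent with the biases observed in \cref{tab:bias_comparison}. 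For the informal statement, we would present the general non-stationarity of $\rho^\ast$ together with this explicit example as the proof.
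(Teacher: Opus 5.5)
Your core argument is correct. It has the same skeleton as the paper's proof: the extended target $\rho^\ast\propto p(\boldsymbol{\theta}\vert\data)\sqrt{g}$ is annihilated by the deterministic part of the Fokker--Planck operator, so you only need to show the noise part does not annihilate it. The decisive step differs.

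The paper diagonalizes the noise covariance $\mathbf{\Sigma}\propto\mathbf{V}$ and argues that the mini-batch limit is an It\^o SDE. It converts to Stratonovich form with the geodesic lemma $\nabla_{\sigma_i}\sigma_i=-u_i\sigma_i$, and concludes from the non-vanishing of the induced drift $\tfrac12\sum_i\eta_i^2u_i\sigma_i=\tfrac12(1-\U\U^\top)\mathbf{\Sigma}\U$. This isolates the mechanism, which motivates preconditioning, using the same lemma as the isotropic proposition. But it infers non-stationarity from the drift alone, although the anisotropic diffusion term $\tfrac12\nabla_\mu\nabla_\nu(\rho D^{\mu\nu}_{\mathrm{SG}})$ also contributes.

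Your weak-form test with $f=g(\boldsymbol{\theta})(\U^\top\ve_i)^2$ checks the combined residual. It is also robust to a choice you leave open: whether the noise term $(1-\U\U^\top)\mathbf{L}\,\diff\boldsymbol{W}$ is read in the Stratonovich sense or, as the paper argues for mini-batching, in the It\^o sense on the sphere. Both readings give your isotropic drift $-\tfrac{(d-1)\sigma^2}{2}\U$, but for anisotropic $\mathbf{V}$ they differ by exactly the paper's drift. With $\mathbf{L}\mathbf{L}^\top=\mathbf{V}$ one finds $\mathbb{E}_{\mathrm{unif}}[\mathcal{L}_{\text{noise}}(\U^\top\ve_i)^2]=c_d\,(V_{ii}-\tfrac1d\operatorname{tr}\mathbf{V})$, with $c_d=\tfrac{d-1}{d+2}$ (Stratonovich) or $c_d=\tfrac{d}{d+2}$ (It\^o). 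Pointwise, the residual is $\tfrac{d-1}{2}$ (resp.\ $\tfrac d2$) times $p(\boldsymbol{\theta}\vert\data)\,(d\,\U^\top\mathbf{V}\U-\operatorname{tr}\mathbf{V})$. So no $\U^\top\mathbf{V}^2\U$ terms occur, and the residual vanishes iff $\mathbf{V}\propto\mathbf{I}$. As you hint, take $\ve_i$ to be an eigenvector of $\mathbf{V}(\boldsymbol{\theta}_0)$ and localize $g$ by continuity, since an anisotropic $\mathbf{V}$ can have a constant diagonal in the coordinate basis.

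Three side claims need correcting, though none is load-bearing.
\begin{enumerate}
\item The uniform measure on $S^{d-1}$ is invariant for \emph{every} divergence-form operator $\operatorname{div}(\mathbf{D}\nabla\,\cdot\,)$, by the divergence theorem, so your rotational-symmetry heuristic is false. What actually breaks invariance is that the noise generator equals $\operatorname{div}(\mathbf{D}\nabla\,\cdot\,)$ plus a drift $\tfrac{d-1}{2}(1-\U\U^\top)\mathbf{V}\U$ (resp.\ $\tfrac d2(\cdots)$). That is again the paper's noise-induced-drift mechanism.
\item Invariance of $\rho^\ast$ under $\mathcal{L}_{\text{det}}$ does not imply that a stationary law with the correct $\boldsymbol{\theta}$-marginal must equal $\rho^\ast$, so your opening sentence overclaims.
\item The marginal extension goes beyond what the paper proves. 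Its formal version asserts exactly that $\rho^\ast$ fails the stationary Fokker--Planck equation, which your first part already establishes. The perturbative sketch would also not work as written. $\mathcal{L}_{\text{det}}$ is skew-adjoint in $L^2(\rho^\ast)$ and generally not ergodic; for rotationally symmetric targets the flow stays in the plane spanned by the initial $\boldsymbol{\theta}$ and $\U$. Hence $\mathcal{L}_{\text{det}}^\dagger h=\cdots$ need not be solvable, and the zeroth-order term of a small-noise expansion is selected by an averaging condition rather than being $\rho^\ast$ automatically. The Gaussian variance correction is asserted, not computed.
\end{enumerate}
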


This is caused by a non-vanishing noise-induced drift term.
In Table \ref{tab:bias_comparison}, we verify and quantify this effect by comparing the second-moment bias between samples and analytical expectations \citep{hoffman2022tuning}, $b^2 = (\bar{\boldsymbol{\theta}^2}_{\rm sample}-\mathbb{E}[\boldsymbol{\theta}^2])^2/{\rm Var}(\boldsymbol{\theta}^2)$, under different scenarios of explicit noise injection, based on the empirical average of squared sample $\bar{\boldsymbol{\theta}^2}_{\rm sample}$. This leads to the important finding that the sample bias of \smiley increases substantially across all settings under anisotropic noise.

\paragraph{Noise Preconditioning} 
As shown above, stochastic MCLMC relies on isotropic gradient noise. However, in practice, the mini-batching covariance $\mathbf{V}(\boldsymbol{\theta})$ is often highly anisotropic, which can incur significant bias. To mitigate this, we apply local preconditioning to mini-batch gradients.

Suppose the local noise covariance $\mathbf{V}(\boldsymbol{\theta})$ is known and Cholesky decompose $\mathbf{V}(\boldsymbol{\theta}) = \mathbf{L}(\boldsymbol{\theta})\mathbf{L}(\boldsymbol{\theta})^\top$. We can define a local linear transformation of the space such that at the current position $\boldsymbol{\theta}_0$, the preconditioned coordinates are given by $\boldsymbol{\theta}' = \mathbf{L}(\boldsymbol{\theta}_0)^\top \boldsymbol{\theta}$. The preconditioned stochastic gradient in this local coordinate system is
\begin{align*}
\label{eq:mbgrad}
    &\nabla_{\boldsymbol{\theta}'} \log p(\boldsymbol{\theta}' \vert \batch)\vert_{\boldsymbol{\theta}'=\mathbf{L}(\boldsymbol{\theta}_0)^\top\boldsymbol{\theta}_0} =\\&\quad= \mathbf{L}(\boldsymbol{\theta}_0)^{-1} \nabla_{\boldsymbol{\theta}} \log p(\boldsymbol{\theta} \vert \batch)\vert_{\boldsymbol{\theta} = \boldsymbol{\theta}_0} \nonumber \\
    &\quad= \mathbf{L}(\boldsymbol{\theta}_0)^{-1} \left( \nicefrac{\batchsize}{\datasize}\nabla_{\boldsymbol{\theta}} \log p(\boldsymbol{\theta}_0 \vert \data) + \textstyle\sum_{i\in\batch}\boldsymbol{\varepsilon}_{\mathcal{D}_i} \right).
\end{align*}
By construction, the transformed noise term $\mathbf{L}(\boldsymbol{\theta}_0)^{-1}\sum_{i}\boldsymbol{\varepsilon}_{\mathcal{D}_i}$ has an identity covariance $\mathbf{I}$. This ensures that the noise scales are locally isotropic, reducing mini-batch gradient position-dependent anisotropy and thereby mitigating noise-induced drift.

Note that a spatially varying $\mathbf{L}(\boldsymbol{\theta})$ strictly requires a Riemannian correction term (involving the divergence of the preconditioner) to ensure the dynamics leave the posterior invariant \citep{girolami2011riemann}. However, following standard practice in large-scale SGMCMC \citep{li2016preconditioned}, we treat $\mathbf{L}(\boldsymbol{\theta})$ as locally constant. This approximation sidesteps the prohibitive computational cost of calculating the derivatives of the covariance matrix while significantly reducing the bias compared to unpreconditioned mini-batching.

\begin{table*}[t]
\caption{Mean RMSE ($\downarrow$) and LPPD ($\uparrow$) results ($\pm$ standard deviation across 3 train-test splits) for a 3 hidden-layer fully-connected neural network on regression tasks. All methods use 10 chains with different MAP solutions as starting points.
Batch-wise sampling refers to the equivalent budget of sampling steps with respect to MILE (the gold standard), and epoch-wise sampling equalizes the number of passes through the dataset compared to MILE. Further experimental details can be found in \cref{app:uci}.}
\label{tab:uci_benchmark}
\centering
\resizebox{0.82\textwidth}{!}{
\begin{tabular}{l|ccc|ccc}
\toprule
& \multicolumn{3}{c|}{\textbf{LPPD} ($\uparrow$)} & \multicolumn{3}{c}{\textbf{RMSE} ($\downarrow$)} \\
Dataset & Airfoil & Bikesharing & Energy & Airfoil & Bikesharing & Energy \\
\midrule
\multicolumn{7}{c}{\textbf{Full-batch} Gold Standard} \\
\midrule
MILE & $\mathbf{0.665 \pm 0.062}$ & $\mathbf{0.226 \pm 0.043}$ & $\mathbf{2.204 \pm 0.024}$ & $\mathbf{0.152 \pm 0.014}$ & $\mathbf{0.229 \pm 0.016}$ & $\mathbf{0.032 \pm 0.002}$ \\
\midrule
\multicolumn{7}{c}{\textbf{Stochastic} (Batch-wise Sampling)} \\
\midrule
SGHMC & $-0.176 \pm 0.023$ & $-0.092 \pm 0.029$ & $0.062 \pm 0.034$ & $0.265 \pm 0.020$ & $\mathbf{0.250 \pm 0.015}$ & $0.113 \pm 0.009$ \\
\smiley & $0.280 \pm 0.008$ & $\mathbf{0.132 \pm 0.046}$ & $1.191 \pm 0.015$ & $\mathbf{0.185 \pm 0.006}$ & $\mathbf{0.236 \pm 0.017}$ & $0.045 \pm 0.001$ \\
\psmiley & $\mathbf{0.438 \pm 0.042}$ & $\mathbf{0.185 \pm 0.037}$ & $\mathbf{1.666 \pm 0.017}$ & $\mathbf{0.172 \pm 0.007}$ & $\mathbf{0.231 \pm 0.014}$ & $\mathbf{0.041 \pm 0.001}$ \\
\midrule
\multicolumn{7}{c}{\textbf{Stochastic} (Epoch-wise Sampling)} \\
\midrule
SGHMC & $0.177 \pm 0.037$ & ${0.145 \pm 0.059}$ & $1.063 \pm 0.020$ & $0.214 \pm 0.004$ & $\mathbf{0.236 \pm 0.019}$ & $0.053 \pm 0.002$ \\
\smiley & $0.497 \pm 0.049$ & $0.167 \pm 0.058$ & $1.287 \pm 0.020$ & $0.166 \pm 0.007$ & $\mathbf{0.233 \pm 0.018}$ & $0.046 \pm 0.001$ \\
\psmiley & $\mathbf{0.633 \pm 0.052}$ & $\mathbf{0.221 \pm 0.028}$ & $\mathbf{2.018 \pm 0.060}$ & $\mathbf{0.151 \pm 0.005}$ & $\mathbf{0.230 \pm 0.013}$ & $\mathbf{0.038 \pm 0.003}$ \\
\bottomrule
\end{tabular}
}
\end{table*}

In practice, estimating $\mathbf{V}(\boldsymbol{\theta})$ is computationally infeasible for large-scale models. Here, we propose to only use diagonal preconditioning based on moving averages to make computations tractable. Thus, the reparameterized variable is $\boldsymbol{\theta}'=\left(\sqrt{d}/\Vert\boldsymbol{\sigma}(\sum_{i\in\batch}\boldsymbol{\varepsilon}_{\data_i})\Vert\right)\boldsymbol{\theta}\odot\boldsymbol{\sigma}(\sum_{i\in\batch}\boldsymbol{\varepsilon}_{\data_i})$, where $\odot$ denotes the Hadamard product and we only need to estimate the gradient standard deviation $\boldsymbol{\sigma}(\sum_{i\in\batch}\boldsymbol{\varepsilon}_{\data_i})$. 
The normalizing constant $\sqrt{d}/\Vert\boldsymbol{\sigma}(\sum_{i\in\batch}\boldsymbol{\varepsilon}_{\data_i})\Vert$ is chosen such that $\boldsymbol{\theta}'=\boldsymbol{\theta}$ for isotropic noise.
In addition to the estimation of the gradient standard deviation, we also require an estimate $\boldsymbol{\bar{g}}$ for the expected gradient, which is also computed using a moving average:
\begin{align*}
    \boldsymbol{\bar{g}}^{(t+1)}&\!\gets\! (1\!-\!\alpha)\boldsymbol{\bar{g}}^{(t)}+\alpha \nabla_{\boldsymbol{\theta}} \log p(\boldsymbol{\theta} \vert \batch),\\
    \boldsymbol\sigma^{(t+1)}&\!\gets\!\sqrt{(1\!-\!\alpha)(\boldsymbol\sigma^{(t)})^2+\alpha(\nabla_{\boldsymbol{\theta}} \log p(\boldsymbol{\theta} \vert \batch)-\boldsymbol{\bar{g}}^{(t+1)})^2}.
\end{align*}
$\alpha$ is a smoothing factor (fixed to $\alpha=0.01$ throughout the paper) to ensure stable and low-variance estimates. We call this method preconditioned \smiley (\psmileynospace), as it preconditions the stochastic gradient to ensure the effective noise is isotropic.
In principle, extra strong noise injection \citep{chen2014stochastic} and Fisher scoring \citep{Ahn_fisher} would also be applicable to reduce the bias with potential sacrifice in efficiency.

\paragraph{Analytical Benchmarks} 
Table~\ref{tab:bias_comparison} compares the squared bias $b^2$ of single-chain \smileynospace, \psmileynospace, SGLD, and vanilla SGHMC across several anisotropic noise scenarios.
In all settings, we assume the average noise magnitude is larger than that of the true gradient.
We define three anisotropic noise scenarios based on the noise covariance matrix: (i) \textit{Diagonal}, where the covariance matrix is a constant ill-conditioned diagonal matrix; (ii) \textit{Correlated}, where the covariance matrix is the randomly rotated `Diagonal' covariance; and (iii) \textit{Spatially-varied}, where the covariance matrix is the `Correlated' covariance times $\exp(-\theta_2/\sigma(\theta_2))$ with $\theta_2$ being the second element of $\boldsymbol{\theta}$.

The results show that although \psmiley does not fully match the performance of \smiley under ideal isotropic noise, it significantly reduces the bias in all three anisotropic settings, particularly for an ill-conditioned Gaussian (ICG) posterior.
Furthermore, \psmiley consistently outperforms both SGLD and SGHMC in these challenging scenarios.
Beyond Gaussian mini-batch noise, we examined the performance and sensitivity of \psmiley in various non-Gaussian noise settings in \cref{app:ngnoise}, confirming that the performance under moderate violations of the Gaussian assumption is preserved but degraded under strong non-Gaussian noise.
In addition, we evaluate \psmiley on complex Gaussian mixtures in \cref{app:gmm}, where it effectively traverses energy barriers to discover multiple modes.

\subsection{A Naive \smilet in Practice}

For a more challenging test, we evaluate our methods on a BNN regression benchmark where MILE is considered the current gold standard.   

\paragraph{The Benefit of Gradient Noise Preconditioning}
We first run stochastic samplers using a \emph{batch-wise sampling} approach, i.e., we produce one sample for every mini-batch step. This provides all samplers with the same number of total gradient computations as the MILE baseline. The results (Table \ref{tab:uci_benchmark}, first four rows) provide strong empirical support for our theory as both \smiley and \psmiley consistently outperform the SGHMC baseline in both log pointwise predictive density (LPPD) as a measure for the evaluation of the approaches' uncertainty quantification and root mean squared error (RMSE) to measure their prediction performance. Critically, \psmiley markedly improves upon \smileynospace, confirming that correcting for gradient noise anisotropy is crucial for the optimal performance of stochastic microcanonical dynamics.

\paragraph{Closing the Gap to Full-batch MCMC} The batch-wise sampling comparison still implies that MILE makes more passes over the dataset than the stochastic samplers if both are run for the same number of iterations. To provide a fairer comparison, we also run \emph{epoch-wise} sampling (Table \ref{tab:uci_benchmark}, last three rows). This ensures that the samplers make the same number of full passes over the dataset as MILE. Under this condition, \psmiley closes the remaining performance gap entirely, matching the performance of the full-batch MILE. This result is significant: it demonstrates that with proper preconditioning, stochastic microcanonical dynamics can achieve state-of-the-art sampling performance, effectively mitigating the performance gap often observed between stochastic and full-batch MCMC.

One downside is that the performance of these naive methods is rather sensitive to their hyperparameters. As shown in the Appendix (\cref{fig:bikebatchstep}), the step size crucially depends on the gradient noise, which in turn is related to the batch size. This sensitivity becomes more pronounced on more complex architectures. 
Further experiments on a LeNet architecture (62k parameters) for Fashion-MNIST classification (\cref{tab:fmnist_lenet_results}, \cref{app:lenet}) reinforce the findings of the tabular benchmarks in \cref{tab:uci_benchmark}. While \smiley fails to sample meaningfully, \psmiley performs exceptionally well, indicating that our gradient noise preconditioning remedy enables scaling to larger architectures.

\begin{tikzpicture}
\node [inner sep=-0.1cm, fill=customblue, draw=white, rounded corners=0.5em] {
\begin{tabular}{p{0.98\linewidth}}
\vspace{0.01cm}
\textbf{Pitfall:} Anisotropic mini-batch gradient noise introduces bias in MCLMC.
\\
\textbf{Remedy:} Gradient noise preconditioning resolves the problem, facilitating state-of-the-art performance both in simulated examples and on small to medium-scale BNNs.
\vspace{0.2cm}
\end{tabular}
};
\end{tikzpicture}
\section{Scaling Stochastic Microcanonical Langevin}\label{sec:energytuningandexps}

While gradient noise preconditioning resolves theoretical inconsistencies of stochastic microcanonical dynamics, a critical practical barrier remains: Naive implementations with fixed step sizes, performing very well for smaller models, break down when applied to modern architectures such as transformers. In the following, we discuss the cause of this problem and present our proposed solution, an adaptive tuning scheme that yields the algorithms (p)SMILE.

\subsection{The Challenge of Scaling: Numerical Instability}

When running previously proposed samplers in very high-dimensional and complex parameter spaces, performance notably degrades. Analyzing this result, we find that the microcanonical dynamic is very sensitive to the step size: a too large step size causes the sampler's trajectory to diverge rapidly, while one that is too small leads to prohibitively slow exploration.

This failure is demonstrated in \cref{fig:resnet9_ablation_main}, where we apply \smiley and \psmiley to a ResNet-7 (428k parameters) on CIFAR-10. Despite preconditioning, tuning over various step sizes, and using cosine decay-type step size schedulers, both methods fail to produce meaningful results, diverging into unstable regions. In contrast, SGHMC remains stable, and our later introduced fully-tuned \smilet and \psmile variants perform competitively. This highlights that simply correcting for gradient noise anisotropy is insufficient, and a robust tuning mechanism is needed.

\begin{figure}[t]
    \centering
    \begin{minipage}{0.8\linewidth}
        \centering
        \includegraphics[width=\linewidth, trim=2 2 2 2, clip]{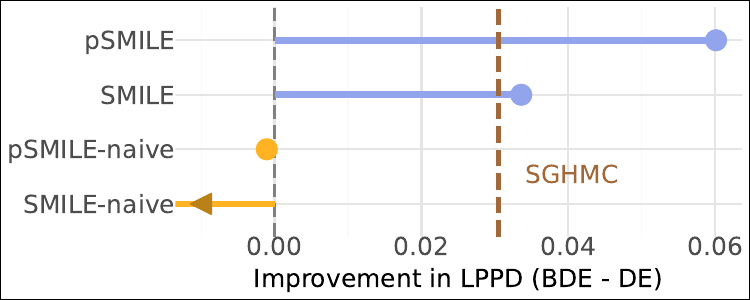}
        \vspace{0.07cm}
    \end{minipage}
    \begin{minipage}{0.8\linewidth}
        \centering
        \includegraphics[width=\linewidth, trim=2 2 2 2, clip]{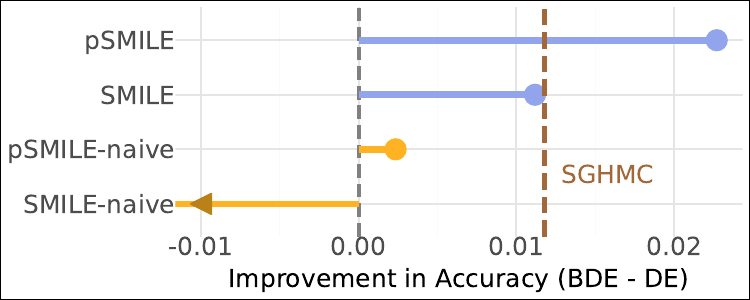}
    \end{minipage}
    \caption{Differences between the Bayesian deep ensemble (BDE) performance of naive (orange) and tuned (blue) SMILE variants and a deep ensemble (DE) baseline for a ResNet-7 (428k parameters) on the CIFAR10 dataset. The x-axis is truncated at -0.01 for readability. For all samplers, we report the best performance of an ensemble of 8 chains over a grid of explored step sizes. Standard deviations over replications are comparable to those reported for the larger-scale setting reported in \cref{tab:resnet18multi_full}. A more detailed plot covering different step and batch sizes is given in \cref{fig:resnet9_ablation}.}
    \label{fig:resnet9_ablation_main}
    \vspace{-15pt}
\end{figure}

\begin{tikzpicture}
\node [inner sep=-0.1cm, fill=customblue, draw=white, rounded corners=0.5em] {
\begin{tabular}{p{0.98\linewidth}}
\vspace{0.01cm}
\textbf{Pitfall:} The naive application of stochastic microcanonical Langevin Dynamics to large modern network architectures fails due to stability issues.
\\
\textbf{Remedy:} Energy-variance-based numerical guardrails and adaptive step size scheduling enable robust and competitive scaling to large network architectures.
\vspace{0.2cm}
\end{tabular}
};
\end{tikzpicture}

\subsection{Energy Error-based Tuning}
\label{sec:energy_tuning} 

To address the numerical instabilities encountered at scale, a key challenge lies in managing errors introduced by the numerical integrator. As discussed in the background, the MCLMC dynamic provides a natural mechanism for this: while the ideal dynamic conserves total energy, any change in energy, $\Delta E$, serves as a direct proxy for the numerical error per integration step. This insight motivates a tuning scheme based on monitoring the energy error $\Delta E$. Practically, it requires only a single user-specified hyperparameter (an initial step size, typically close to the optimizer's learning rate used in the warmstart), introduces no additional noise injection, and operates efficiently in high-dimensional SGMCMC regimes. The full procedure is given in \cref{alg:smilet_g} and described below.

\paragraph{Modeling Energy Error}

To create robust guardrails and dynamic step-size adjustments, we need to assess whether a given energy error is typical or an outlier. This requires modeling the underlying distribution of the energy error to compute meaningful adaptive thresholds.

To this end, we model the distribution of  $|\Delta E|$ in an online fashion using a Gamma distribution, which is well-suited for positive, skewed data. For this, exponential moving averages (EMAs) of its mean $\mu_{|\Delta E|}^{(t)}$ and standard deviation $\sigma_{|\Delta E|}^{(t)}$ are computed:
\begin{equation}\label{eq:emas_energyerror}
    \begin{aligned}
        \mu_{|\Delta E|}^{(t+1)} &\gets (1-\beta)\mu_{|\Delta E|}^{(t)} + \beta |\Delta E|, \nonumber\\ 
    \sigma_{|\Delta E|}^{(t+1)} &\gets \sqrt{(1-\beta) (\sigma_{|\Delta E|}^{(t)})^2 + \beta (|\Delta E| - \mu_{|\Delta E|}^{(t+1)})^2}, \\
    \end{aligned}
\end{equation}
where $\beta$ is the EMAs' smoothing parameter (set to 0.01 throughout the paper). To counteract the bias towards zero in the early stages of the tuning, we further apply a standard correction factor to the variance estimate, helping it converge more rapidly.
In order to provide a tractable estimate for the energy error distribution, we then dynamically fit a Gamma distribution $\mathcal{G}a( \gamma^{(t)}_{\texttt{shape}},\gamma^{(t)}_{\texttt{scale}})$ using moment-matching based on the empirical parameters
\begin{equation}\label{eq:momentmatchinggamma}
        \gamma_{\texttt{shape}}^{(t)} \gets \nicefrac{(\sigma_{|\Delta E|}^{(t)})^2}{\mu_{|\Delta E|}^{(t)}}, \quad
        \gamma_{\texttt{scale}}^{(t)} \gets \left(\nicefrac{\mu_{|\Delta E|}^{(t)}}{\sigma_{|\Delta E|}^{(t)}}\right)^2.
\end{equation}
Quantiles from this distribution are fitted in a highly practical online fashion, providing a reliable measure of extremity (see \cref{app:wilsonhilferty} for details). Because the sampler begins from an optimized starting point (a high-likelihood region) with a meaningful step size (e.g., the final learning rate from the warmstart optimization), these errors are naturally contained within a reasonable range---meaning they are not excessive and do not significantly degrade performance. This creates a ``golden window'' for the EMA estimates to stabilize quickly (see \cref{fig:ResNet-18_stepsize_viz}) and allows us to inform adaptive tuning as well as to define numerical guardrails throughout the sampling process.

\begingroup
\setlength{\textfloatsep}{1pt}
\begin{algorithm}[t]
\footnotesize
\caption{Energy variance-based tuning}
\label{alg:smilet_g}
\begin{algorithmic}[1]
\setlength{\itemsep}{0pt}
\setlength{\parskip}{0pt}
\Require $\boldsymbol{\theta}^{(t)}$, $\boldsymbol{u}^{(t)}$, $\mu_{|\Delta E|}^{(t)}$, $\sigma_{|\Delta E|}^{(t)}$, $\beta$, $\kappa$, $a$ and $\delta$. 
\State \blue{Integrate} (as detailed in  \cref{app:integrator}):
\vspace{-0.2cm}
$$\boldsymbol{\theta}^{(t+1)}, \boldsymbol{u}^{(t+1)}, \Delta E \!\gets\! \textsc{IntegratorStep}(\boldsymbol{\theta}^{(t)}, \boldsymbol{u}^{(t)}, \eta^{(t)}).$$
\vspace{-0.6cm}
\State Estimate the current Gamma distribution via moment matching (\cref{eq:momentmatchinggamma}).
\State Update the exponential moving averages $\mu_{|\Delta E|}^{(t+1)}$ and $\sigma_{|\Delta E|}^{(t+1)}$ as detailed in \cref{eq:emas_energyerror}.
\State \blue{Apply adaptive numerical guardrails} (\cref{eq:numericalguard}).
\State \blue{Adapt step size} $\eta^{(t)}$ according to \cref{eq:stepsizeadapt}.
\State \textbf{Return} $\boldsymbol{\theta}^{(t+1)}, \boldsymbol{u}^{(t+1)}, \eta^{(t+1)}$
\end{algorithmic}
\end{algorithm}
\endgroup

\paragraph{Numerical Guardrails}

Numerical instability can cause the chain's trajectory to diverge, leading it into regions of extremely low likelihood from which recovery is computationally expensive and slow.
To avoid this, we define a quantile threshold $\kappa$ based on a high quantile ($0.98$ by default) of the Gamma distribution. We approximate these quantiles efficiently using the Wilson-Hilferty transform \citep{wilson1931distribution}, denoting the quantile function by $\mathcal{G}a^{-1}(\cdot, \gamma^{(t)}_{\texttt{shape}},\gamma^{(t)}_{\texttt{scale}})$. If $|\Delta E|$ exceeds this threshold, we reject the proposed step, resetting the position to its previous state and zeroing the momentum as
\begin{align}\label{eq:numericalguard}
    &(\boldsymbol{\theta}^{(t+1)}, \boldsymbol{u}^{(t+1)}) \gets
    (\boldsymbol{\theta}^{(t)}, \boldsymbol{0}), \nonumber\\ &\text{if } |\Delta E| > \mathcal{G}a^{-1} (\kappa, \gamma^{(t)}_{\texttt{shape}}, \gamma^{(t)}_{\texttt{scale}}). 
\end{align}

Our guardrail acts as a cost-efficient replacement for a Metropolis-Hastings correction step. While it differs in being non-probabilistic, we opted for this lightweight variant to leverage the distinct role of energy variance in MCLMC; in our large-scale BNN applications, a conventional adjustment would be computationally prohibitive and prone to extreme rejection rates \citep{garriga-alonso2021exact}. By preventing the chain from wasting computation in low-probability regions, we ensure a higher proportion of samples drawn from high-posterior density regions. Without this guardrail, the sampler's performance catastrophically degrades (cf.~\cref{tab:reset_prob}).
\begin{table*}[t]
\caption{Large-scale image classification results for two tasks comparing different ensemble-based methods (rows) and performance metrics (columns). For details, single chain performance and standard errors see \cref{sec:bnnexps} and in particular \cref{tab:resnet18multi_full,tab:single_member_metrics_resnet18,tab:step_size_ablation_vit}.}
\label{tab:merged_image_benchmarks}
\centering
\resizebox{0.8\textwidth}{!}
{
\begin{tabular}{@{}ll|ccccccc@{}}
\toprule
\textbf{Task / Model} & \textbf{Method} & \textbf{LPPD} ($\uparrow$) & \textbf{Acc.} ($\uparrow$) & \textbf{F1} ($\uparrow$) & \textbf{Brier} ($\downarrow$) & \textbf{NLL} ($\downarrow$) & \textbf{AUROC} ($\uparrow$) & \textbf{AURC} ($\downarrow$) \\ \midrule
 & DE & $-0.3026$ & $0.8995$ & $0.8938$ & $0.1545$ & $0.4507$ & $0.9934$ & $0.0161$ \\
CIFAR-10 / & cSGLD & $\mathbf{-0.2584}$ & $\mathbf{0.9140}$ & $\mathbf{0.9139}$ & $\mathbf{0.1264}$ & $0.4059$ & $\mathbf{0.9954}$ & $\mathbf{0.0111}$ \\
ResNet-18 & SGHMC & $-0.2908$ & $0.9104$ & $0.9103$ & $0.1389$ & $0.4942$ & $0.9949$ & $0.0127$ \\
(11.2M) & SMILE & $-0.2763$ & $0.9101$ & $0.9100$ & $0.1344$ & $0.4071$ & $0.9951$ & $0.0122$ \\
 & \textbf{pSMILE} & $\mathbf{-0.2598}$ & $\mathbf{0.9135}$ & $\mathbf{0.9134}$ & $\mathbf{0.1264}$ & $\mathbf{0.3900}$ & $\mathbf{0.9954}$ & $\mathbf{0.0113}$ \\ \midrule
 & DE & $-0.9858$ & $0.7688$ & $0.7702$ & $0.3367$ & $1.4957$ & $0.9691$ & $0.0645$ \\
Imagenette / & cSGLD & $-0.7815$ & $0.7638$ & $0.7640$ & $0.3325$ & $3.4947$ & $0.9645$ & $0.0691$ \\
ViT (22M) & SGHMC & $-0.7841$ & $0.7578$ & $0.7582$ & $0.3440$ & $1.2928$ & $0.9625$ & $0.0758$ \\
 & \textbf{pSMILE} & $\mathbf{-0.7708}$ & $\mathbf{0.7732}$ & $\mathbf{0.7738}$ & $\mathbf{0.3112}$ & $\mathbf{1.1985}$ & $\mathbf{0.9712}$ & $\mathbf{0.0589}$ \\ \bottomrule
\end{tabular}}
\end{table*}

\paragraph{Adaptive Step Sizes}

While the guardrails prevent catastrophic failure, a more nuanced mechanism can further boost efficient exploration.
For this, we adapt the step size multiplicatively based on where $|\Delta E|$ lies relative to the Gamma quantiles: when errors are too large, the step sizes are decreased, and increased when they are unusually small (with defaults $\delta = 0.02,a=0.1$):
\begin{equation}\label{eq:stepsizeadapt}
    \eta^{(t+1)} =
\begin{cases}
    \eta^{(t)} (1 + \delta) & |\Delta E| <  \tau_{\text{lo}}^{(t)}  \\
    \eta^{(t)} (1 - \delta) & |\Delta E| > \tau_{\text{hi}}^{(t)}\\
    \eta^{(t)} & \text{otherwise}
\end{cases}
\end{equation}
where $a$ parameterizes the probability of adaptation, $\tau_{\text{lo}}^{(t)}=\mathcal{G}a^{-1} (\frac{a}{3}, \gamma^{(t)}_{\texttt{shape}}, \gamma^{(t)}_{\texttt{scale}})$ and $\tau_{\text{hi}}^{(t)}=\mathcal{G}a^{-1} (1 - \frac{2a}{3}, \gamma^{(t)}_{\texttt{shape}}, \gamma^{(t)}_{\texttt{scale}})$. The update is asymmetric, giving stronger incentives to shrink the step size and thus biasing the dynamics toward stability. This design connects to adaptive MCMC \citep{andrieu2006ergodicity, haario2006dram, roberts2009examples}, but is specialized for large-scale SGMCMC, where classical Metropolis-Hastings adjustments and detailed balance cannot be leveraged effectively \citep{garriga-alonso2021exact}.

\subsection{Sampling of Contemporary Bayesian Neural Network Architectures}

Equipped with our adaptive tuning scheme, we now evaluate (p)SMILE on contemporary BNN architectures, demonstrating their scalability and competitive performance.

\paragraph{Image Classification} On CIFAR-10 using a ResNet-18 (11.2M parameters), \psmile matches or outperforms both SGHMC and cSGLD across all metrics (\cref{tab:merged_image_benchmarks}). Single-chain results in \cref{tab:single_member_metrics_resnet18} (Appendix) confirm that although (p)SMILE is configured for a single exploration-exploitation cycle per chain, it remains highly effective in isolation. These results match cSGLD’s performance without requiring its characteristic aggressive within-chain cycling. Ultimately, \psmile yields the best overall balance of accuracy and uncertainty quantification, demonstrating the synergistic benefit of our proposed remedies.

Robustness is further validated on a 22M-parameter Vision Transformer using Imagenette, where \psmile maintains its competitive advantage (cf.\ \cref{tab:merged_image_benchmarks}). This consistent performance across distinct, high-dimensional architectures generalizes our findings for large-scale inference.

\paragraph{Language Modeling} 
\begin{figure}[b!]
  \begin{center}
    \includegraphics[width=0.95\linewidth]{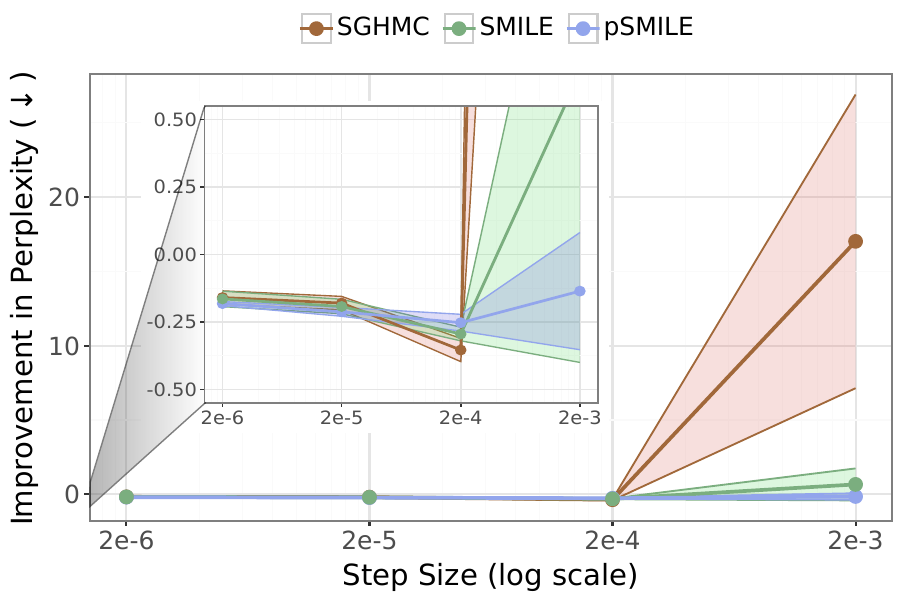}
  \end{center}
  \caption{Robustness assessment: Perplexity improvement (smaller is better, std.\ dev.\ as shaded area) of MCMC sampling over the optimized warmstart across samplers and step sizes for the nanoGPT model with 10.8M parameters on \texttt{modern-shakespeare}.}
  \label{fig:nanogpt_perplexity}
  \vspace{-0.3cm}
\end{figure}
In the experiments corresponding to \cref{fig:nanogpt_perplexity} and \cref{tab:nanogptcomp} (\cref{app:nanogpt}), we test robustness on a nanoGPT model \citep[10.8M parameters,][]{Karpathy2022}. We ablate the initial step size over four orders of magnitude. For reference, the well-working learning rate of AdamW for the DE optimization is $2\cdot10^{-4}$, which also coincides with the best performing step size for all considered SGMCMC samplers. 
The results reveal robustness as a key strength of our method. While SGHMC's performance degrades catastrophically with a misspecified step size (cf.~\cref{fig:nanogpt_perplexity}), \psmile
consistently improves upon the strong DE baseline across all tested step sizes. This increased robustness to the initial step size is a significant practical advantage, reducing the need for costly hyperparameter sweeps common in \mbox{SGMCMC} (see \cref{sec:bnnexps} for further practical tuning advice).

\subsection{Analysis and Ablations of the Tuning Mechanism}
\label{sec:ablations}

We conduct a series of ablations to better understand the behavior of our proposed adaptive scheme. Our hyperparameter robustness analyses (\cref{tab:adaptation_prob,tab:reset_prob} in the Appendix) show that the energy-variance-based tuning is robust across a range of settings. Notably, the ablation on our guardrail mechanism (\cref{tab:reset_prob}) highlights its necessity: without it, naive sampling fails catastrophically. The influence of the batch size (\cref{fig:fmnist_batchsize_ablation}, Appendix) is also explored; while larger batches improve performance, the gains diminish, confirming that our method remains effective with moderate batch sizes. Finally, our analyses of the tuning dynamics (\cref{fig:nanogpt_stepsize_viz,fig:ResNet-18_stepsize_viz}) reveal that the adaptive step size and Gamma distribution parameters rapidly converge to a stable regime. This is a key user-friendly feature, as it heavily reduces the need for manual tuning, which poses a non-trivial challenge in many traditional MCMC implementations (e.g., consider the hyperparameters of the cyclical schedule to be specified in cSGLD). Furthermore, the empirically observed energy errors appear to be well described by the proposed Gamma distribution (\cref{app:tuning}).

\begin{tikzpicture}
\node [inner sep=-0.1cm, fill=customblue, draw=white, rounded corners=0.5em] {
\begin{tabular}{p{0.98\linewidth}}
\vspace{0.01cm}
\textbf{Takeaway:} 1) Gradient noise preconditioning in combination with 2) energy variance-based tuning enables the robust and successful application of stochastic microcanonical Langevin dynamics to modern BNN architectures, consistently yielding state-of-the-art performance. 
\vspace{0.2cm}
\end{tabular}
};
\end{tikzpicture}

\section{Discussion}
\label{sec:discussion}

This work bridges the gap between the strong performance of full-batch microcanonical Langevin Monte Carlo and the scalability needs of Bayesian inference. We show that naive stochastic adaptations fail due to gradient noise bias and instability, and propose two key solutions: a principled preconditioning scheme for correctness, and an energy-variance-based tuner for stability.
Our resulting method, (p)SMILE, matches and often outperforms strong baselines, demonstrating that the benefits of microcanonical dynamics can be realized in the stochastic regime. This establishes (p)SMILE as a practical and powerful SGMCMC method for complex models like BNNs. All experiments and our software are made publicly available at \url{https://github.com/EmanuelSommer/SMILE}.

\paragraph{Limitations and Future Work}
Our method relies solely on mini-batch gradient noise to drive dynamics. While effective, we did not study reintroducing explicit noise as in the original MCLMC SDE, which could enhance exploration but adds further tuning complexity. Exploring this trade-off is a promising future direction. 
SMILE relies on a linear, diagonal preconditioner estimated via a dynamic procedure. While this effectively scales the dynamics, it entails a residual approximation bias. This is an inherent, shared constraint of stochastic gradient-based samplers where exact correction is generally intractable \citep{wenzel_2020_HowGooda}.

\section*{Acknowledgements}

This research was supported by grants from NVIDIA and utilized NVIDIA products, including NVIDIA A100 and NVIDIA RTX 6000 GPUs. DR’s research is funded by the Deutsche Forschungsgemeinschaft (DFG, German Research Foundation) – 578966082. This material is also based upon work supported in part by the Heising-Simons Foundation grant 2021-3282, by the NSF CSSI grant award No. 2311559, and by the U.S. Department of Energy, Office of Science, Office of Advanced Scientific Computing Research under Contract No. DE-AC02-05CH11231 at Lawrence Berkeley National Laboratory to enable research for Data-intensive Machine Learning and Analysis. The authors thank Julius Kobialka for helpful discussions and also thank the four anonymous ICML 2026 reviewers for their constructive and helpful feedback.

\section*{Impact Statement}

This paper presents work whose goal is to advance the field of Machine
Learning. There are many potential societal consequences of our work, none of
which we feel must be specifically highlighted here.

\bibliography{refs}
\bibliographystyle{icml2026}

\newpage
\appendix
\onecolumn
\section{Stationary Distribution in the Continuous-time Limit}\label{app:theory}

\subsection{Full-batch}

We start by reviewing the full-batch continuous-time MCLMC. We denote the variables as $\Z=(\boldsymbol{\theta},\U)$, where $\Z \in \mathbb{R}^d \times S^{d-1}$, i.e., the velocity is normalized to the unit sphere. We will use Greek letter indices to denote parameters on the sphere and Latin letter indices to denote parameters in the Euclidean space. We will adopt Einstein convention, which implies a sum whenever there are repeated upper and lower indices. We raise and lower Greek indices using the metric $g_{\mu\nu}$ and its inverse $g^{\mu\nu}$, e.g.\ $\partial^\mu := g^{\mu\nu}\partial_\nu$.

MCLMC dynamics can be expressed as a Stratonovich degenerate diffusion on the manifold,
\begin{equation} \label{eq: sde mclmc}
    \diff \Z = B(\Z) \diff t + \eta \sum_{i = 1}^d \sigma_i(\Z) \circ \diff W_i.
\end{equation}
The notation convention is as in \citet{Jakob_fluc}, and we briefly summarize the notation here: 
$W_i$ are independent $\mathbb{R}$-valued Wiener processes, $\circ$ 
denotes that the SDE is to be interpreted in the Stratonovich sense,
$\sigma_i$ are vector fields, in coordinates expressed as
\begin{equation} \label{eq:mclmcfp1}
    \sigma_i(\Th) = g^{\mu \nu}(\Th) \frac{\partial u_i}{\partial \vartheta^{\nu}}(\Th) \frac{\partial}{\partial \vartheta^{\mu}}(\Th).
\end{equation}
Given that $\U$ lives in the unit sphere, $\U$ are parametrized with spherical coordinates, 
\begin{equation}
\U(\boldsymbol{\vartheta})=(\cos \vartheta_1,\,\sin\vartheta_1\cos\vartheta_2,...,\,\sin\vartheta_1\sin\vartheta_2...\cos\vartheta_{d-1},\,\sin\vartheta_1\sin\vartheta_2...\sin\vartheta_{d-1}),   
\end{equation}
and we denote $\boldsymbol{\vartheta}=(\vartheta_1,\vartheta_2,...,\vartheta_{d-1})$. The metric tensor on the sphere is
\begin{equation}
    g_{\mu \nu} = \sum_{i=1}^d \partial_{\mu} u_i(\vartheta) \partial_{\nu} u_i(\vartheta).
\end{equation}

The drift vector field is
\begin{equation}
    B(\boldsymbol{\theta},\U) = (\U, \, \mathbf{P}(\U)\nabla_{\boldsymbol{\theta}}\log p(\boldsymbol{\theta\vert\mathcal{D}})/(d-1)),   
\end{equation}
and $\mathbf{P}(\U)=\mathbf{I}-\U\U^\top$ is the projection tensor. 

The Fokker-Planck equation corresponding to Equation \ref{eq: sde mclmc} is
\begin{equation} \label{eq:mclmcfp2}
    \dot{\rho} = - \left(\nabla_i (\rho B^i) + \nabla_{\mu} (\rho B^{\mu}) \right)+ \frac{1}{2} \nabla_{\mu} \nabla_{\nu}( D^{\mu \nu} \rho),
\end{equation}
where $\nabla$ is the covariant derivative, and the diffusion tensor is
\begin{equation}
    D^{\mu \nu} = \eta^2 \sum_{i=1}^d \sigma_i^{\mu} \sigma_i^{\nu} = \eta^2 \sum_{i=1}^d \partial^{\mu} u_i \, \partial^{\nu} u_i = \eta^2 g^{\mu \nu},
\end{equation}
so the diffusion term in the Fokker-Planck equation becomes the Laplacian. It is shown in \citet{Jakob_fluc} that the stationary distribution of the Fokker-Planck equation has the form of $\rho_{\infty}\propto p(\boldsymbol{\theta}\vert\mathcal{D})\sqrt{g}$ because both terms on RHS of \cref{eq:mclmcfp2} vanish with $\rho_{\infty}$. Here $g$ is the determinant of the metric.

\subsection{Isotropic Mini-batching Noise}

Now we consider mini-batching noise on the gradient. We are interested in the limit of the step size going to zero. Mini-batching noise can then be modeled as a Wiener process. 

Note, however, that in the limit, the process converges to the It\^{o}'s SDE, not Stratonovich's SDE. This is because the realization of the mini-batching noise is always taken at the beginning of the step. For Gaussian mini-batching noise with isotropic covariance matrix, $\mathbf \Sigma = \eta^2 \mathbf I$, we thus obtain
\begin{equation} \label{eq:ito1}
    \diff \Z = B(\Z) \diff t + \eta \sum_{i = 1}^d \sigma_i(\Z)  \diff W_i,
\end{equation}
which is equivalent to the following Stratonovich SDE:
\begin{equation} \label{eq:ito2}
    \diff \Z = B_{\rm SG}(\Z) \diff t + \eta \sum_{i = 1}^d \sigma_i(\Z) \circ\diff W_i,
\end{equation}
where the drift has an additional term:
\begin{equation}
    B_{\rm SG} = B - \frac{1}{2} \sum_{i=1}^d \nabla_{\sigma_i} \sigma_i.
\end{equation}

\begin{lemma}
    $\sigma_i$ are geodesic vector fields. Specifically, 
    \[ \nabla_{\sigma_i} \sigma_i = \lambda_i(\Th) \sigma_i \qquad \lambda_i(\Th) = - u_i(\Th).\]
\end{lemma}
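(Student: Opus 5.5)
The plan is to work extrinsically, viewing $S^{d-1}\subset\mathbb{R}^d$ and identifying tangent vectors at $\U$ with vectors orthogonal to $\U$. The vector field $\sigma_i$, defined by $\sigma_i^\mu = g^{\mu\nu}\partial_\nu u_i$, is by construction the Riemannian gradient on the sphere of the coordinate function $u_i$, that is, of the restriction of the linear function $\U\mapsto \ve_i^\top\U$, where $\ve_i$ now denotes the $i$-th standard basis vector. The first step is therefore to show that, as an ambient vector, $\sigma_i(\U)=\mathbf{P}(\U)\ve_i=\ve_i-u_i\U$. This follows because the Riemannian gradient of a restricted function is the tangential projection of its Euclidean gradient, and the Euclidean gradient of $\U\mapsto\ve_i^\top\U$ is $\ve_i$. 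To check this directly in the spherical coordinates, one verifies that $g_{\mu\nu}\sigma_i^\nu=\partial_\mu u_i=\partial_\mu\U\cdot\ve_i$ and that $\partial_\mu\U\cdot(\ve_i-u_i\U)=\partial_\mu\U\cdot\ve_i$, using $\partial_\mu\U\cdot\U=0$ (which holds since $|\U|=1$).

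Next I compute the Levi-Civita covariant derivative on the embedded sphere. For tangent fields $X,Y$ it is the tangential projection of the ambient directional derivative: $\nabla_X Y=\mathbf{P}(\U)\,(D_X Y)$. With $X=Y=\sigma_i$ and $Y(\U)=\ve_i-u_i\U$, the ambient derivative is
\[
D_{\sigma_i}\sigma_i=-(\sigma_i\cdot\ve_i)\,\U-u_i\,\sigma_i .
\]
Projecting kills the $\U$ term, and $\sigma_i$ is already tangent, so $\nabla_{\sigma_i}\sigma_i=-u_i\,\sigma_i$. This is exactly the claim with $\lambda_i=-u_i$.

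Finally, the claim that the fields are "geodesic" in the stated sense amounts to exactly this: the acceleration is parallel to the field, so the integral curves are reparametrized great circles through $\pm\ve_i$. The main obstacle is justifying, in the intrinsic coordinate language the paper uses (with $\partial^\mu$ and Christoffel symbols), that the extrinsic projection formula coincides with the covariant derivative in \cref{eq:mclmcfp2}. I would invoke the Gauss formula for submanifolds of Euclidean space, since the metric $g_{\mu\nu}=\partial_\mu\U\cdot\partial_\nu\U$ is precisely the induced metric. Alternatively, I would verify the identity in coordinates via $\nabla_{\sigma}\sigma=\tfrac12\nabla|\sigma|^2-\sigma\times\mathrm{curl}$. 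Since $\sigma_i$ is a gradient field, the curl term vanishes, which gives $\nabla_{\sigma_i}\sigma_i=\tfrac12\,\mathrm{grad}(1-u_i^2)=-u_i\,\mathrm{grad}\,u_i=-u_i\sigma_i$. This second route is fully intrinsic and is the one I would write down as the clean argument.
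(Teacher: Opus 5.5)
Your extrinsic computation is the paper's proof. The paper also writes $\sigma_i=\mathbf{P}\ve_i$ and invokes the Gauss formula $\nabla_X Y=\mathbf{P}\,\mathbf{D}[X]Y$. It computes the ambient Jacobian $-u_i\mathbf{I}-\U\ve_i^\top$ and lets the projection remove the $\U$-component, leaving $-u_i\sigma_i$. The one difference is that the paper takes $\sigma_i=\mathbf{P}\ve_i$ for granted, whereas you derive it from $\sigma_i^\mu=g^{\mu\nu}\partial_\nu u_i$ and $\partial_\mu\U\cdot\U=0$; that is a worthwhile addition.

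Your preferred intrinsic route is also valid, but it should be phrased without the curl. The identity involving $\sigma\times\mathrm{curl}$ is three-dimensional. On $S^{d-1}$ the correct statement is that the covariant Hessian $\nabla_\mu\nabla_\nu u_i$ is symmetric, so that $(\nabla_{\sigma_i}\sigma_i)_\mu=\sigma_i^\nu\nabla_\nu\nabla_\mu u_i=\sigma_i^\nu\nabla_\mu\nabla_\nu u_i=\tfrac12\partial_\mu|\sigma_i|^2$.

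This route still needs $|\sigma_i|^2=1-u_i^2$, which you obtain from your first, extrinsic step, so it is not fully independent of the embedding. What it gives you is a structural explanation: any gradient field whose squared norm is a function of the potential has acceleration parallel to itself. The paper's projection calculation is the shorter direct check.
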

\begin{proof}
To calculate the parallel transport of the vector field along itself, we will use the Gauss formula, by which the Levi–Civita connection of a submanifold is the tangential projection of the ambient derivative. Viewing the sphere $S^{d-1}$ as a submanifold of the Euclidean space $\mathbb{R}^d$, we can use this to calculate
\begin{equation*}
    \nabla_X Y = \mathbf{P} \mathbf{D}[X] Y,
\end{equation*}
where $X$ and $Y$ are any smooth vector fields in the Euclidean space, which are tangential to the sphere when restricted to the sphere. $\mathbf{P} = (\mathbf{I} - \U \U^T)$ here again is the projection operator and $\mathbf{D}[X]$ denotes the Jacobian matrix. $\sigma_i = \mathbf{P} \boldsymbol{e}_i$, where $\boldsymbol{e}_i$ is a unit vector in direction $i$ is such a vector field. Its Jacobian is
\begin{equation*}
    D [\sigma_i] = - \mathbf{I} (\boldsymbol{e}_i \cdot \U) - \U \otimes \boldsymbol{e}_i,
\end{equation*}
where $\otimes$ is the Kronecker product.
Using the projected ambient derivative gives:
\begin{equation*}
    \nabla_{\sigma_i} \sigma_i = - \mathbf{P} (\mathbf{I} (\boldsymbol{e}_i \cdot \U) + \U \otimes \boldsymbol{e}_i) \mathbf{P} \boldsymbol{e}_i = - (\boldsymbol{e}_i \cdot \U) \sigma_i,
\end{equation*}
where we have used that $\mathbf{P}^2 = \mathbf{P}$ and $\mathbf{P} (\U \otimes \boldsymbol{e}_i) = 0$. 
\end{proof}

Using the result from this Lemma, we verify that the noise-induced drift term vanishes in the isotropic case:
\begin{equation}
    [B_{\rm SG}]_{\mu} = B_{\mu} - \frac{1}{2} \sum_{i=1}^d [\nabla_{\sigma_i} \sigma_i ]_{\mu} = 
    B_{\mu} + \frac{1}{2} \sum_{i=1}^d u_i \frac{\partial u_i}{\partial \vartheta^{\mu}} = 
    B_{\mu} + \frac{1}{2} \frac{\partial}{\partial \vartheta^{\mu}} \sum_{i=1}^d u_i^2 = B_{\mu}.
\end{equation}

This leads to the following formalization of the properties of MCLMC under isotropic stochastic gradients:

\setcounter{proposition}{0}
\begin{proposition}[Stationarity under Isotropic Noise]
If the mini-batching noise can in the limit be modeled as an isotropic Wiener process, the additional drift term $\frac{1}{2} \sum \nabla_{\sigma_i} \sigma_i$ vanishes exactly. Consequently, the Fokker-Planck equation (Eq. \ref{eq:mclmcfp2}) remains invariant, preserving the target stationary distribution $p(\boldsymbol{\theta} \vert \data)$ and the geometric ergodicity of the original dynamics.
\end{proposition}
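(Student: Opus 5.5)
The strategy is to reduce the mini-batch dynamics to the full-batch Stratonovich diffusion of \cref{eq: sde mclmc}, whose stationarity was established by \citet{Jakob_fluc}. The argument has three steps.

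\textbf{Step 1: It\^{o}--Stratonovich conversion.} In the vanishing step-size limit, isotropic mini-batch noise produces the It\^{o} SDE \cref{eq:ito1}. This is because the noise realization is drawn at the start of each step, and the resulting increment is projected through $\mathbf{P}(\U)$ onto the sphere, exactly as the $\sigma_i$ do. The standard conversion rewrites it as the Stratonovich SDE \cref{eq:ito2} with drift
\[
B_{\rm SG} = B - \tfrac12 \sum_i \nabla_{\sigma_i}\sigma_i .
\]
On a Riemannian manifold the correction term is the covariant derivative of each noise field along itself, so we use the Levi-Civita connection of $S^{d-1}$. The $\boldsymbol{\theta}$-components receive no correction, since the noise acts only on $\U$.

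\textbf{Step 2: the correction vanishes.} The preceding Lemma gives $\nabla_{\sigma_i}\sigma_i = -u_i\sigma_i$. In coordinates, $\sigma_i^\mu = \partial^\mu u_i$, so the correction is
\[
\tfrac12\sum_i u_i\,\partial_\mu u_i = \tfrac14\,\partial_\mu \sum_i u_i^2 .
\]
Since $\sum_i u_i^2 = 1$ on the sphere, this is zero, and hence $B_{\rm SG}=B$. The conversion is therefore identical to the full-batch SDE \cref{eq: sde mclmc} with noise amplitude $\eta$.

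\textbf{Step 3: Fokker--Planck equation and ergodicity.} Because the drift and the diffusion tensor $D^{\mu\nu}=\eta^2 g^{\mu\nu}$ are unchanged, the Fokker--Planck equation is exactly \cref{eq:mclmcfp2}. By \citet{Jakob_fluc}, $\rho_\infty \propto p(\boldsymbol{\theta}\vert\data)\sqrt{g}$ annihilates both the transport term and the Laplacian term. Marginalizing over $\U$, where $\sqrt{g}$ is the uniform measure on the sphere, gives the target. Geometric ergodicity is a property of the generator alone, typically obtained from hypoellipticity via H\"ormander's condition together with a Lyapunov function. Since the generator coincides with the full-batch one, we can invoke the full-batch result verbatim.

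\textbf{Main obstacle.} Step 2 is pure algebra. The delicate part is Step 1: justifying that the discrete mini-batch scheme converges weakly to the It\^{o} (not Stratonovich) limit, with the noise entering through $\sigma_i$ and a covariance normalized to $\eta^2\mathbf{I}$ after the $(d-1)^{-1}$ and $N/B$ scalings. This requires a diffusion-approximation argument under the Gaussian-noise modelling assumption.

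Here I would take the proposition's hypothesis at face value, namely that the limit \emph{is} the It\^{o} SDE \cref{eq:ito1}. The proof then only needs the conversion formula on manifolds, for instance via embedding $S^{d-1}\subset\mathbb{R}^d$ and using the Gauss formula, as in the Lemma.
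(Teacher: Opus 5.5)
Your proposal is correct and follows the paper's own argument. You convert the It\^{o} limit to Stratonovich form with the Levi-Civita correction $-\tfrac12\sum_i\nabla_{\sigma_i}\sigma_i$, apply the Gauss-formula Lemma $\nabla_{\sigma_i}\sigma_i=-u_i\sigma_i$, kill the resulting term via $\sum_i u_i^2=1$, and then appeal to the unchanged Fokker--Planck equation and the full-batch results of \citet{Jakob_fluc}; your constant $\tfrac14$ in $\tfrac12\sum_i u_i\partial_\mu u_i=\tfrac14\partial_\mu\sum_i u_i^2$ is in fact the correct one where the paper writes $\tfrac12$, which is immaterial since the term vanishes.
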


\subsection{Anisotropic Mini-batching Noise}

Let the mini-batching noise now have some general covariance matrix $\mathbf\Sigma$. Without loss of generality, we may assume that $\mathbf\Sigma = \mathrm{Diag}(\eta_1^2, \eta_2^2, \ldots, \eta^2_d)$, by rotating the coordinate system if it is not in this form.
We now get
\begin{equation} \label{eq:ito3}
    \diff \Z = B_{\rm SG}(\Z) \diff t + \sum_{i = 1}^d \eta_i \sigma_i(\Z) \circ \diff W_i,
\end{equation}
and
\begin{equation}
    B_{\rm SG} = B - \frac{1}{2} \sum_{i=1}^d \eta_i^2 \nabla_{\sigma_i} \sigma_i = B + \frac{1}{2} \sum_{i=1}^d \eta^2_i u_i \sigma_i    ,
\end{equation}
where we have used the Lemma from the previous section.
The diffusion tensor is now
\begin{equation}
    D^{\mu \nu}_{\rm SG} = \sum_{i=1}^d \eta_i^2 \sigma_i^{\mu} \sigma_i^{\nu} = \sum_{i=1}^d \eta_i^2 \partial^{\mu} u_i \, \partial^{\nu} u_i.
\end{equation}
Putting these together, the Fokker-Planck equation becomes
\begin{equation} \label{eq: anisotropic smile fp}
    \dot{\rho} = - \left(\nabla_i (\rho B^i) + \nabla_{\mu} (\rho B^{\mu}) \right) + \frac{1}{2} \sum_{i=1}^d \eta_i^2 \left( -\nabla_{\mu}( \rho u_i \sigma_i^{\mu} ) + \nabla_{\mu} \nabla_{\nu} ( \rho \partial^{\mu} u_i \partial^{\nu} u_i ) \right).
\end{equation}

By analyzing this modified equation, we establish the formal statement for Theorem \ref{thm:anisotropic} in the main text:
\setcounter{theorem}{0} 
\renewcommand{\thetheorem}{3.1}
\begin{theorem}
\label{thm:anisotropic_formal}
For continuous-time MCLMC, the stationary distribution under anisotropic noise (if it exists) deviates from the target distribution. Specifically, because the noise-induced drift $\frac{1}{2} \sum \eta_i^2 u_i \sigma_i$ does not vanish for non-uniform $\eta_i$, the target posterior no longer satisfies the stationary Fokker-Planck equation.
\end{theorem}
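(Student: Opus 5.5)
We plug the candidate density $\rho_\infty \propto p(\boldsymbol{\theta}\vert\data)\sqrt{g}$ into the anisotropic Fokker--Planck equation \eqref{eq: anisotropic smile fp} and show that the right-hand side does not vanish unless all $\eta_i$ are equal. The terms $-\big(\nabla_i(\rho B^i)+\nabla_\mu(\rho B^\mu)\big)$ are exactly the deterministic part of the full-batch equation \eqref{eq:mclmcfp2}, and by the full-batch result of \citet{Jakob_fluc} they vanish at $\rho_\infty$. It therefore suffices to show that the noise contribution
\begin{equation*}
R(\Z) := \frac{1}{2}\sum_{i=1}^d \eta_i^2\Big(-\nabla_\mu(\rho_\infty u_i\sigma_i^\mu) + \nabla_\mu\nabla_\nu(\rho_\infty \partial^\mu u_i\,\partial^\nu u_i)\Big)
\end{equation*}
is not identically zero. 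In $R$, $\rho_\infty$ factors as $p(\boldsymbol{\theta}\vert\data)$ times the uniform density on the sphere, and $p$ is constant in the $\U$ directions. So $R = p(\boldsymbol{\theta}\vert\data)\sqrt{g}\,\cdot\,\frac{1}{2}\sum_i\eta_i^2 F_i(\U)$, where $F_i$ is a purely geometric function on $S^{d-1}$.

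Next we compute $F_i$ extrinsically, reusing the Gauss-formula approach of the Lemma. We have $\sigma_i = \mathbf{P}\boldsymbol{e}_i$, which is the Riemannian gradient of the coordinate function $u_i$ on the sphere. Hence $\partial^\mu u_i\,\partial^\nu u_i = \sigma_i^\mu\sigma_i^\nu$, and the second-order term equals $\nabla_\mu\big(\nabla_\nu(\sigma_i^\mu\sigma_i^\nu)\big) = \nabla_\mu\big((\operatorname{div}\sigma_i)\sigma_i^\mu + \nabla_{\sigma_i}\sigma_i^\mu\big)$ for uniform density. We then use two standard facts on $S^{d-1}$: $\operatorname{div}\sigma_i = \Delta u_i = -(d-1)u_i$ (since $u_i$ is a first spherical harmonic), and $\nabla_{\sigma_i}\sigma_i = -u_i\sigma_i$ from the Lemma. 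With these the second-order term becomes $-d\,\operatorname{div}(u_i\sigma_i)$, and the first-order term is $-\operatorname{div}(u_i\sigma_i)$. Further, $\operatorname{div}(u_i\sigma_i) = |\sigma_i|^2 + u_i\Delta u_i = (1-u_i^2) - (d-1)u_i^2 = 1 - d\,u_i^2$. Thus $F_i = c_d(1 - d\,u_i^2)$ for a nonzero constant $c_d$ (I expect $c_d=-(d+1)$, but only $c_d \neq 0$ matters). As a sanity check, $\sum_i F_i = c_d(d - d\sum_i u_i^2) = 0$, which reproduces Proposition~1 for isotropic noise, where all $\eta_i$ are equal.

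Finally, $R \propto \sum_i \eta_i^2(1 - d u_i^2) = \sum_i\eta_i^2 - d\sum_i\eta_i^2 u_i^2$. This is constant on the sphere only if all $\eta_i^2$ are equal, since otherwise evaluating at $\U=\boldsymbol{e}_j$ and at $\U=\boldsymbol{e}_k$ with $\eta_j\neq\eta_k$ gives different values. A nonconstant function cannot be zero everywhere, so $R\neq 0$ at some point and $\rho_\infty$ is not stationary. Since $\rho_\infty$ is the only density whose $\boldsymbol{\theta}$-marginal is the target with the structure inherited from the energy-conserving dynamics, any stationary distribution, if one exists, must differ from $p(\boldsymbol{\theta}\vert\data)\sqrt{g}$.

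The main obstacle is the step from ``$\rho_\infty$ is not stationary'' to ``the $\boldsymbol{\theta}$-marginal of the stationary law differs from the target.'' One could in principle have a different joint density with the correct $\boldsymbol{\theta}$-marginal. I would handle this by proving the statement in the form given in the formal theorem, namely that the target lifted as $\rho_\infty$ fails the stationary Fokker--Planck equation. If a marginal statement is needed, I would integrate the Fokker--Planck equation over $\U$ for a candidate $\rho = p(\boldsymbol{\theta})q(\boldsymbol{\theta},\U)$ and show that the first $\U$-moment equation forces $q$ to be non-uniform in a way incompatible with the correct marginal. The secondary risk is sign and normalization slips in the divergence computation. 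These are controlled by the $\sum_i F_i=0$ check.
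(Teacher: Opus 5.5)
Your proof is correct, and it takes a more explicit route than the paper. The paper stops at the anisotropic Fokker--Planck equation and the observation that the noise-induced drift $\frac{1}{2}\sum_i\eta_i^2u_i\sigma_i$ no longer cancels (for equal $\eta_i$ it equals $\frac{\eta^2}{4}\,\partial^\mu\sum_i u_i^2=0$). It infers non-stationarity of the target from this observation alone. That step is not airtight, because the diffusion tensor $D^{\mu\nu}_{\rm SG}=\sum_i\eta_i^2\sigma_i^\mu\sigma_i^\nu$ also becomes anisotropic, and its second-order contribution at $\rho_\infty$ could a priori compensate the drift.

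You close this gap by evaluating the entire noise contribution at the lifted target. Using $\operatorname{div}\sigma_i=-(d-1)u_i$ and the Lemma, both the first- and second-order parts reduce to multiples of $\operatorname{div}(u_i\sigma_i)=1-d\,u_i^2$. They add rather than cancel, and your expected constant $c_d=-(d+1)$ is what the paper's equation gives. The residual $\propto\sum_i\eta_i^2(1-d\,u_i^2)$ vanishes identically if and only if all $\eta_i$ coincide. So Proposition~1 and the theorem become the two directions of one explicit identity. The paper's version is brief and names the mechanism that preconditioning targets; yours actually establishes the claim.

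Two small remarks. First, your sentence asserting that $\rho_\infty$ is ``the only density'' with the correct $\boldsymbol{\theta}$-marginal is unproven and unnecessary. The formal statement concerns the lifted target $p(\boldsymbol{\theta}\vert\data)\sqrt{g}$, which is exactly what you show fails; neither you nor the paper proves the marginal version. Second, $c_d$ depends on how the second-order operator is written. If the Stratonovich diffusion term is taken as $\frac{1}{2}\sum_i\eta_i^2\nabla_\mu\big(\sigma_i^\mu\nabla_\nu(\rho\,\sigma_i^\nu)\big)$, the first-order term coming from $B_{\rm SG}$ cancels against part of this operator, and you get $c_d=-d$ instead. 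Every variant produces the same factor $1-d\,u_i^2$ with a nonzero constant, so your conclusion does not depend on this choice.
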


\subsection{Finite Step Size, Distribution-Free Sketch via a Discrete Generator}\label{app:disc-sketch}

While our formal results characterize the continuous-time dynamics, developing a complete finite-step-size theory for the discretized dynamics remains an important yet highly non-trivial direction for future work \citep[see, e.g.,][]{durmus2019analysis,wang2025quantitative}. Nevertheless, we provide a sketch of a discrete analysis in the following, using finite-step analysis analogous to Section 3.1 in \citet{vollmer2016exploration}.

The continuous-time analysis above captures the limit in which mini-batch noise can be treated as a Wiener process. 
In practice, however, we can use a discrete integrator with finite step size $\varepsilon$, and this discretization can introduce a systematic $\mathcal{O}(\varepsilon)$ perturbation of the dynamics.

To obtain a distribution-free intuition for this finite-step bias, we can consider the corresponding discrete-time Markov chain through its discrete generator. 
A second-order Taylor expansion shows that, at leading order $\mathcal{O}(\varepsilon)$, the influence of the mini-batch noise enters through its {\em covariance structure} rather than through the full noise distribution, with higher-order moments (such as skewness and kurtosis) entering only at $\mathcal{O}(\varepsilon^2)$ and beyond.
Moreover, because the velocity is constrained to the microcanonical sphere, the bias mechanism can be interpreted as a form of {\em diffusion anisotropy on the sphere}: when the noise covariance is not effectively isotropic in tangent directions, the discrete dynamics develops a noise-induced drift that perturbs the stationary distribution.

If the effective noise is (approximately) isotropic, the perturbation reduces to a spherical diffusion that eliminates the noise-induced drift, leaving only standard deterministic integrator errors and higher-order stochastic terms; if it is anisotropic, stationarity is generally distorted at leading order in $\varepsilon$. 
These considerations motivate local preconditioning as a practical way to suppress the anisotropic part of the noise covariance, thereby reducing the dominant bias. 

A complete, fully rigorous finite-step-size derivation for the specific integrators used in our method is technically involved and is therefore left for future work.

\section{Experimental Setup and General Details}
\label{sec:app_expdetails}

\subsection{Software and Computing Environment} 
Experiments were implemented in Python using \texttt{jax} \citep{jax2018github}, \texttt{BlackJAX} \citep{cabezas2024blackjax}, and extensions of the codebase of \citet{sommer2025mile}, with selected baselines from \texttt{posteriors} \citep{duffield2025scalable}. Computations were performed on two NVIDIA RTX A6000 or four NVIDIA A100 GPUs and a 64-core AMD Ryzen™ Threadripper™ CPU; CPU parallelism was used for smaller tasks, while large models like CNNs were trained on GPUs. A comprehensive codebase is available at \url{https://github.com/EmanuelSommer/SMILE}.

\subsection{Datasets \& Optimization} 
\cref{tab:dataoverview} summarizes the benchmark datasets utilized in our BNN experiments. For all tabular benchmarks, unless specified otherwise, we use a 70\% train, 10\% validation, and 20\% test split together with a fully connected model architecture of three hidden layers with 16 neurons each. For image classification benchmarks, we adopt the standard train/test split and employ CNN/ResNet-type architectures of varying size. Before training the nanoGPT model, we translated the \texttt{tiny-shakespeare} dataset available from \cite{Karpathy2022} into more modern English using Gemini 2.5 Pro to facilitate a more accessible assessment of the quality of the generated text. This was done using the prompt ``\,\texttt{Please translate the attached file into simplified modern English. Keep the structure of the text as is (new line for each speaker, speaker name followed by a colon, then the sentence in a new line)}'', together with an attached \texttt{txt} file of the original text.  We call this dataset \texttt{modern-shakespeare} and provide it within our public code repository for reproducibility. Furthermore, we use ADAM with decoupled weight decay \citep{loshchilov2018decoupled} for all DEs and vanilla optimizations. Unless stated otherwise, we assume a standard Gaussian prior, $\mathcal{N}(\bm0, \bm I_d)$, as is common practice \citep{kobialka2026interplay}.

\begin{table}[h]
\begin{small}
\begin{center}
\caption{Datasets used in the Bayesian Deep Learning experiments.} \label{tab:dataoverview}
\vskip 0.1in
\resizebox{0.9\columnwidth}{!}{%
\begin{tabular}{lrrl}
Dataset  & Size & Features & Source \\ \hline 
Airfoil & 1503 & 5 &  \cite{Dua.2019}  \\
Bikesharing & 17379 & 13 & \cite{misc_bike_sharing_dataset_275}  \\
Energy & 768 & 8 &  \cite{Tsanas.2012} \\
F(ashion)-MNIST & 60000 & 28x28 & \cite{xiao2017} \\
CIFAR-10 & 60000 & 28x28 & \cite{krizhevsky2009learning} \\
\texttt{modern-shakespeare} & 39890 (rows) & 65 (CharacterTokenizer) & adapted from \cite{Karpathy2022}\\
Imagenette & 13394 & 160x160 & \cite{imagenette} \\
\hline
\end{tabular}
}
\end{center}
\end{small}
\end{table}

\subsection{Evaluation}\label{sec:eval}

We evaluate predictive performance using a range of metrics, with specific metrics chosen based on the task type (e.g., classification, regression, or language modeling). For evaluating the quality of the full predictive distribution and uncertainty, we utilize the log pointwise predictive density (LPPD) on a held-out test set $\data_{\text{test}}$, as advocated by \citet{gelman2014a}. The LPPD is defined as:
\begin{align} \label{eq:lppd}
    \text{LPPD} = \frac{1}{n_{\text{test}}} \sum_{(\bm{y}^\ast, \bm{x}^\ast) \in \data_{\text{test}}} \log \left(\frac{1}{K \cdot S} \sum_{k=1}^K \sum_{s=1}^{S} p \left(\bm{y}^\ast | \bm{\theta}^{(k,s)}(\bm{x}^\ast)\right)\right)
\end{align}
Where $\bm{\theta}^{(k,s)}$ are the obtained posterior samples from $K$ chains of $S$ samples each. This metric measures how well the predictive distribution covers the observed labels, with higher values indicating a better fit.

We note that, particularly in Bayesian neural networks, high LPPD does not necessarily imply faithful posterior approximation in function space due to the singular and non-identifiable nature of the posterior \citep{deshpande2024are}. We therefore interpret LPPD jointly with complementary predictive and calibration metrics.

For classification tasks, we assess point predictions using accuracy, which measures the proportion of correct predictions. Following \citet{zhou2025asymmetric} we also consider the Brier score, which quantifies the mean squared error of the predicted probabilities, and the Area Under the Receiver Operating Characteristic (AUROC), which evaluates the model's discriminative ability. Furthermore, we examine calibration with the Area Under the Reliability Curve (AURC).

For regression tasks, the Root Mean Squared Error (RMSE) is employed to evaluate the accuracy of point predictions.

Further, for models, such as our nanoGPT model, we report the Negative Log-Likelihood (NLL) or Perplexity to assess how well the model predicts the test data. Perplexity, a common metric for language models, is a function of the average NLL (more specifically $\text{Perplexity}:= \exp(\text{NLL})$) and indicates the effective number of choices the model has at each step, with a lower value representing better performance. The specific metrics used in each experiment are indicated in the respective results sections.

\subsection{LLM Usage}

The only use of Large Language Models (LLMs) was for minor language, grammar, and stylistic edits, as well as trivial coding support (such as plotting scripts). No part of the scientific work or core implementation was generated by LLMs.

\section{Analytical Benchmark}
\label{app:ana}
In \cref{tab:bias_comparison} we use four SGMCMC samplers, \smiley, SGLD, vanilla SGHMC, and \psmileynospace, to sample three 10-dimensional analytical posteriors with explicit noise injection. The three analytical posteriors are
\begin{enumerate}
    \item \textbf{Ill-Conditioned Gaussian (ICG)} The distribution is $\mathcal{N}(0,\mathbf{\Sigma}=\mathbf{R}^\top\mathbf{\Lambda}\mathbf{R})$, where $\mathbf{\Lambda}$ is a diagonal matrix with eigenvalues equally sampled in log space from $1/100$ to $100$ and $\mathbf{R}$ is a random rotation matrix.
    \item \textbf{Rosenbrock} This is a product of 5 banana-shaped posterior from \citet{grumitt2022deterministic}. The posterior is $\log p(\boldsymbol{\theta}) = -\sum_{i=1}^{d/2}[(\theta_{2i-1}^2-\theta_{2i})^2/Q+(\theta_{2i-1}-1)^2]$ with $Q=0.1$ in our setting.
    \item \textbf{Neal's Funnel} This is a hierarchical model with $\theta_1\sim \mathcal{N}(0,3)$ and $\theta_i\sim\mathcal{N}(0,e^{\theta/2}),\quad i\in[2,...,10]$.
\end{enumerate}

The explicit noise injection is realized by adding a noise term in the $\log p(\boldsymbol{\theta})$,
\begin{align}
    \log p(\boldsymbol{\theta})_{\rm noise}=\log p(\boldsymbol{\theta})+\bm{\epsilon}^\top\boldsymbol{\theta}
\end{align}
where $\bm\epsilon$ is a 10-dimensional Gaussian noise with covariance matrix $\mathbf{V}(\boldsymbol{\theta})$. In `Isotropic' case, $\mathbf{V}_{\rm iso}=256\mathbf{I}$; in `Diagonal' case, $\mathbf{V}_{\rm diag}=\mathbf{V}_{\rm iso}\mathbf{\Lambda}$ and the $\mathbf{\Lambda}$ is the same one used in ICG posterior; in `Correlated' case, $\mathbf{V}_{\rm corr}=\mathbf{R}'^\top\mathbf{V}_{\rm diag}\mathbf{R}'$ and $\mathbf{R}'$ is another random rotation matrix different from the rotation matrix used in ICG posterior; in `Spatially-Varied' case, the covariance matrix is $\mathbf{V}_{\rm spa}(\boldsymbol{\theta}) = \mathbf{V}_{\rm corr}\exp(-\theta_2/\sigma(\theta_2))$, with $\theta_2$ being the second element of $\boldsymbol{\theta}$. In each scenario, we initialize the position at the mean of the posterior and run 10 chains for $10^6$ samples.

The optimal step size for all SGMCMC samplers is determined from a grid search. We first run benchmark for $\Delta t=10^i,\quad i\in\{-6,-5,...,0\}$, then for optimal $i_{\rm opt}$ we evaluate the performance for 15 step sizes spaced equally from $i_{\rm opt}-1$ to $i_{\rm opt}+1$ and report the bias in \cref{tab:bias_comparison}. 
For ICG and Rosenbrock, the \textbf{average bias} over all 10 dimensions is reported, and we use \textbf{maximum bias} for Neal's Funnel, because in practice the bias of $\theta_0$ is significantly larger than other parameters. We employ a bootstrapping technique to estimate the standard deviation of the mean bias by repeatedly drawing 10 chains with replacement from the original 10 chains, and report the standard deviation in \cref{tab:bias_comparison}.

\section{Further Results on Analytical Targets}

Here we provide a collection of additional analytical benchmarks, validating our design choices and \psmiley performances.

\subsection{Energy Error Distribution}

To assess whether the numerical guardrail of \cref{sec:energy_tuning} is needed on the 10-dimensional analytical targets, we examined the per-step energy-error distribution observed when sampling the benchmarks of \cref{tab:bias_comparison}. As a representative example, the empirical $|\Delta E|$ distribution on the ICG target is shown in \cref{fig:kde_error}. Its tail is lighter than both the moment-matched Gamma and the moment-matched Half-Normal fits, and excursions beyond the 98th-percentile of the Gamma fit (the default guardrail threshold) are extremely rare. We therefore omit the guardrail on the analytical benchmarks, and use \psmiley (without energy-error tuning) for analytical benchmarks.

\begin{figure}
    \centering
    \includegraphics[width=0.6\linewidth]{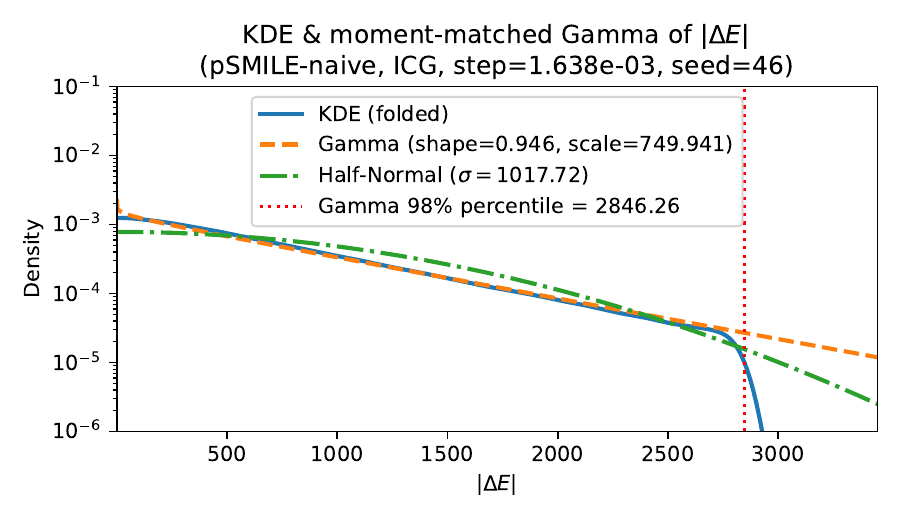}
    \caption{The KDE plot of the absolute energy error of \psmiley on the ill-conditioned
  Gaussian target. \textbf{Blue}: KDE of $|\Delta E|$ across all sampling steps. \textbf{Orange}: moment-matched Gamma distribution used in \cref{sec:energy_tuning}. \textbf{Green}: Half-Normal distribution with matched standard deviation. \textbf{Red}: 98th-percentile of the fitted Gamma, the default threshold of the
  numerical guardrail in \cref{sec:energy_tuning}.}
    \label{fig:kde_error}
\end{figure}

\subsection{Non-Gaussian Noise Experiments}
\label{app:ngnoise}
\begin{table}
\centering
\caption{Bias $b^2$($\downarrow$) of \psmiley on three 10-$d$ analytical posteriors with explicitly injected Gaussian and Heavy-tail noise. The target is the same as \cref{tab:bias_comparison}.}
\label{tab:heavytail_noise}
\begin{tabular}{c c c c c}
\toprule
Target & Sub-type & Gaussian & Student's-t & Laplacian \\
\midrule
\multirow{4}{*}{\textbf{ICG}} 
 & Isotropic & $0.006 \pm 0.001$ & $0.004 \pm 0.001$ & $0.004 \pm 0.001$ \\
 & Anisotropic & $0.038 \pm 0.008$ & $0.071 \pm 0.020$ & $0.072 \pm 0.020$ \\
 & Correlated & $0.055 \pm 0.007$ & $0.078 \pm 0.010$ & $0.126 \pm 0.054$ \\
 & Spatially-varied & $0.093 \pm 0.019$ & $0.083 \pm 0.015$ & $0.081 \pm 0.011$ \\
\midrule
\multirow{4}{*}{\textbf{Rosenbrock}} 
 & Isotropic & $0.004 \pm 0.001$ & $0.007 \pm 0.001$ & $0.009 \pm 0.001$ \\
 & Anisotropic & $0.046 \pm 0.002$ & $0.045 \pm 0.002$ & $0.046 \pm 0.002$ \\
 & Correlated & $0.070 \pm 0.005$ & $0.063 \pm 0.028$ & $0.064 \pm 0.009$ \\
 & Spatially-varied & $0.052 \pm 0.005$ & $0.068 \pm 0.019$ & $0.049 \pm 0.009$ \\
\midrule
\multirow{4}{*}{\textbf{Funnel}} 
 & Isotropic & $0.021 \pm 0.005$ & $0.016 \pm 0.004$ & $0.022 \pm 0.002$ \\
 & Anisotropic & $0.042 \pm 0.012$ & $0.039 \pm 0.009$ & $0.063 \pm 0.009$ \\
 & Correlated & $0.004 \pm 0.002$ & $0.036 \pm 0.009$ & $0.019 \pm 0.006$ \\
 & Spatially-varied & $0.012 \pm 0.003$ & $0.084 \pm 0.027$ & $0.061 \pm 0.019$ \\
\bottomrule
\end{tabular}
\end{table}

The Gaussian-gradient-noise assumption implied by the central limit theorem can fail in deep learning, particularly under small batch sizes or heavy-tailed data \citep{simsekli19a}. We therefore stress-test \psmiley by replacing the Gaussian injected noise in \cref{tab:bias_comparison} with two symmetric heavy-tailed distributions: the Laplacian and the Student's-$t$ ($\nu=5$). All other settings are identical to those in \cref{tab:bias_comparison}; the resulting biases are reported in \cref{tab:heavytail_noise}.

For each step, we first draw a vector $\boldsymbol{\epsilon} = (\epsilon_1, \dots, \epsilon_{10})$ whose components are i.i.d. samples from the chosen distribution, scaled to unit variance ($\sigma(\epsilon_i)=1$). For each covariance structure $\mathrm{s} \in \{\rm iso, aniso, corr, spa\}$, the injected noise is then
\begin{equation}
    \boldsymbol{\epsilon}_{\rm s} = \mathbf{L}_{\rm s}\boldsymbol{\epsilon},
    \label{eqn:rescale}
\end{equation}
where $\mathbf{L}_{\rm s}$ is the Cholesky factor of the corresponding covariance matrix $\mathbf{V}_{\rm s}$ defined in \cref{app:ana}.

Across all three targets and four covariance structures, the bias under heavy-tailed noise is comparable to and at worst mildly elevated over the Gaussian case. Crucially, \psmiley under heavy-tailed noise remains substantially less biased than SGHMC and SGLD were under Gaussian noise (see \cref{tab:bias_comparison}), confirming that the method does not rely on a strict Gaussian-noise assumption.

\begin{figure}
    \centering
    \includegraphics[width=0.6\linewidth]{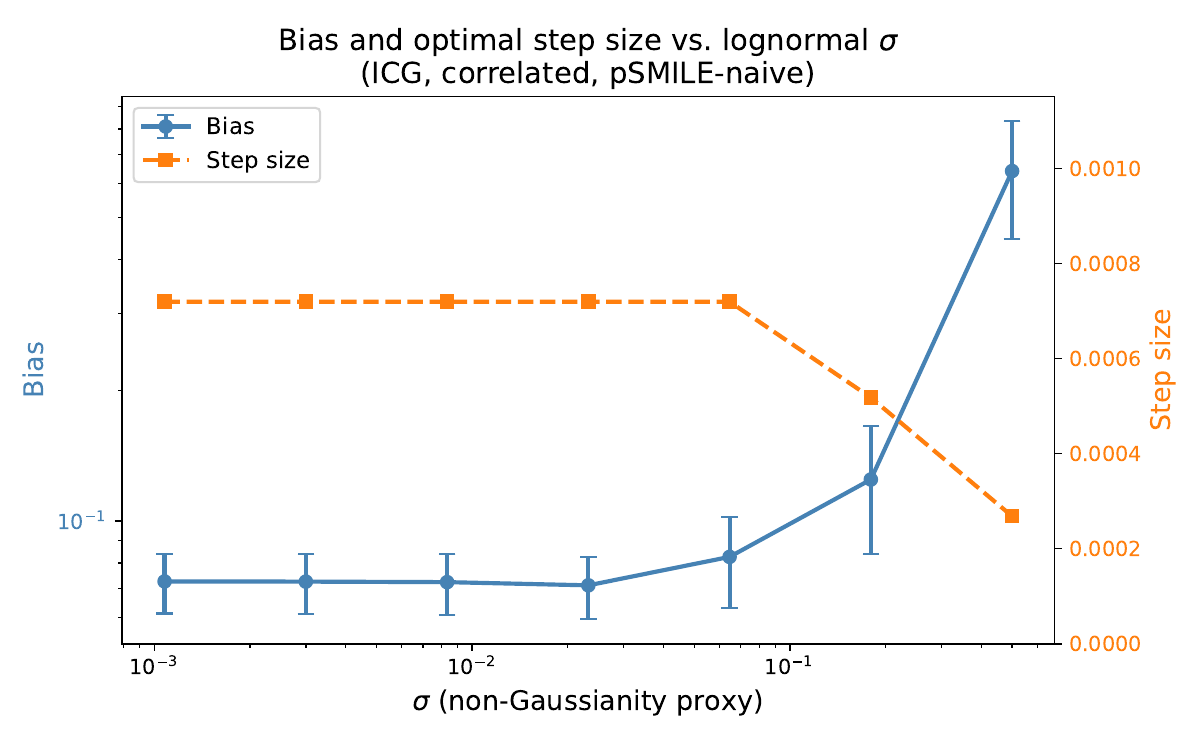}
    \caption{Minimal bias (blue) and the corresponding optimal step size (orange) of \psmiley on the ill-conditioned Gaussian target, as a function of the log-normal noise parameter $\sigma$.}
    \label{fig:biasvssigma}
\end{figure}

To quantify how the bias scales with the level of non-Gaussianity, and to probe the impact of skewness, we ran an experiment on the ICG target with centered log-normal noise,
\begin{equation}
    \xi_i = e^{\sigma Z_i} - e^{\sigma^2/2}, \quad Z_i \sim \mathcal{N}(0,1).
\end{equation}
The noise is then rescaled via \cref{eqn:rescale} to have covariance $\mathbf{V}_{\rm corr}$. The parameter $\sigma$ continuously controls the higher moments (skewness, kurtosis, \dots) while leaving the first two moments of $\xi_i$ unchanged after rescaling.

As shown in \cref{fig:biasvssigma}, for $\sigma < 0.1$ the bias is essentially flat, and the optimal step size is stable, implying \psmiley is insensitive to mild non-Gaussianity. For $\sigma > 0.1$, higher-order moments start to dominate: the optimal step size shrinks to compensate, but the minimal achievable bias still grows under a fixed sampling budget, marking the regime in which non-Gaussianity becomes a first-order concern.

\subsection{Gaussian Mixture Models Experiments}\label{app:gmm}

\begin{figure}
    \centering
    \includegraphics[width=0.9\linewidth]{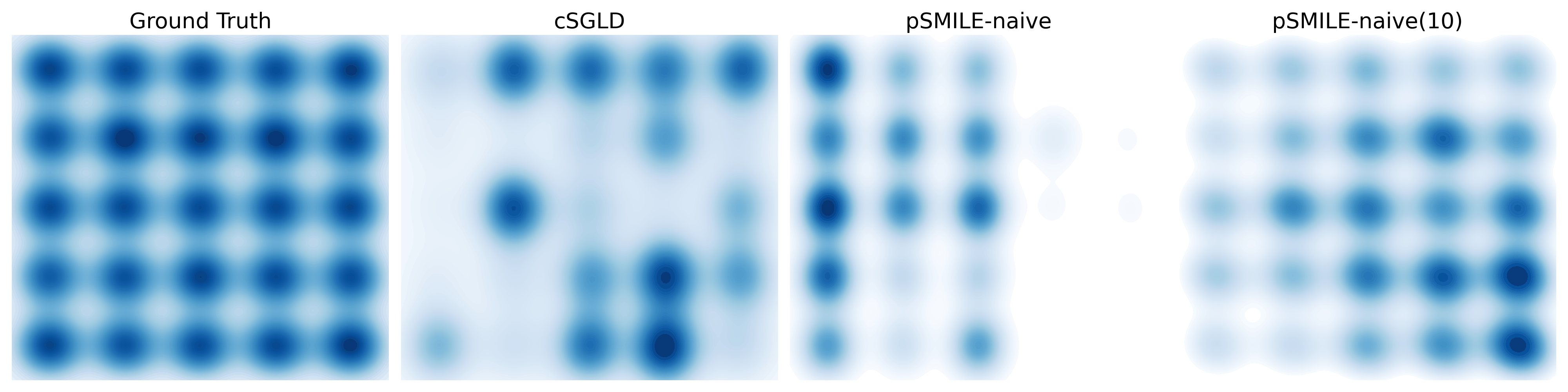}
    \caption{Posterior samples for the first two dimensions of a 10-dimensional Gaussian Mixture Model. \textbf{Left:} Ground truth density. \textbf{Middle Left:} Single-chain cSGLD samples. \textbf{Middle Right:} Single-chain pSMILE-naive samples. \textbf{Right:} pSMILE-naive samples using 10 independent chains. Both methods were tuned to have a matching average update norm per step. pSMILE-naive successfully traverses modes even in the single-chain setting.}
    \label{fig:gmm}
\end{figure}

To further validate the performance of pSMILE-naive on multi-modal distributions, we conducted a test on a 10-dimensional Gaussian Mixture Model (GMM).

\textbf{Setup.} The first two dimensions, $\theta_1, \theta_2$, of the parameter vector follow a 2D mixture of 25 Gaussian distributions:
\begin{equation}
    p(\theta_1, \theta_2) = \frac{1}{25} \sum_{i=1}^5 \sum_{j=1}^5 \mathcal{N}\left( (\mu_i, \mu_j), \Sigma_{\rm GMM} \right)
\end{equation}
where the means are drawn from the grid set $\mathcal{M} = \{-4, -2, 0, 2, 4\}$ such that $\mu_i, \mu_j \in \mathcal{M}$. The covariance matrix for each component is set to $\Sigma_{\rm GMM} = 0.3 \mathbf{I}_2$. The remaining eight dimensions follow a standard Gaussian distribution, $\theta_k \sim \mathcal{N}(0,1)$ for $k \in \{3, \dots, 10\}$. The ground truth density for the first two dimensions is visualized in the left panel of \cref{fig:gmm}.

\textbf{Implementation Details.} Following the setup in \cref{app:ana}, we inject isotropic random noise (the `Isotropic' case). We sample the posterior using pSMILE-naive for $10^6$ steps with a fixed step size of $0.6$. For comparison, we include cSGLD as a baseline, as it is known for its ability to explore multiple modes. We set a cosine schedule for cSGLD with 5,000 steps per cycle. Crucially, to ensure a fair comparison of exploration capability, we carefully tuned the step sizes such that the \textit{average Euclidean norm of the update vector per step} was approximately equal for both methods.

\textbf{Results and Interpretation.} As shown in \cref{fig:gmm}, the single-chain pSMILE-naive (middle right) demonstrates a strong capability to traverse energy barriers and discover multiple modes, performing comparably to the cSGLD baseline (middle left) under equal update magnitudes. Furthermore, the multi-chain approach (right panel, 10 chains) successfully recovers all modes. This confirms that our method's exploration efficiency arises not just from the ensemble strategy, but also from the inherent dynamics of the sampler itself.

\section{Bayesian Neural Network Experiments}\label{sec:bnnexps}
\subsection{UCI Benchmark}\label{app:uci} 
For the UCI benchmark presented in \cref{tab:uci_benchmark}, we fit classical mean regression to the different tasks corresponding to the datasets described in \cref{tab:dataoverview}. In the process, we always use a fully-connected feed-forward neural network with three hidden layers of size 16 each, resulting in about 700 total model parameters (depending on the input dimension). When sampling from the posterior, we use 1000 samples per ensemble member (chain), which is set to 10 members if not specified otherwise. 

For the recently proposed Microcanonical Langevin Ensemble (MILE) method, we follow the setup of \citet{sommer2025mile}. Specifically, the DEs are optimized with the Adam optimizer with decoupled weight decay \citep{loshchilov2018decoupled} and memberwise early stopping, after which sampling employs the auto-tuning strategy of MILE with 50k steps before providing 1k samples (following thinning of 10k samples). This leads to 60k full-batch sampling steps.

For all SGMCMC variants, we fix the step size of 0.001, which was individually verified to work best in terms of performance by a grid over both smaller and larger step sizes. For \smileynospace, we even explored decaying step size schedulers, which did not improve the performance significantly. Furthermore, we both consider epoch-wise and batch-wise sampling. For the batch-wise case, we allow the stochastic variants to have the same computational budget as MILE in terms of sampling steps. As with the same number of mini-batch steps, the SGMCMC variants have not passed the data as often as MILE, so we also decided to compare with the epoch-wise sampling. This results in a number of batches times more sampling steps, as we only collect a sample after a full pass through the dataset. For hardware that can still handle full-batch updates well, this results in a stark computational overhead compared with the batch-wise and full-batch approaches, but might be considered a fairer comparison. For all SGMCMC experiments, we use a batch size of 256. 

Further, each method in \cref{tab:uci_benchmark} is evaluated using three distinct train-test splits to assess the robustness of its performance.  

For the batch and step size ablations summarized in \cref{fig:bikebatchstep}, we adopted the same setup as for \cref{tab:uci_benchmark}, altering just the batch and step size of \smiley and also considering one mini-batch and one full-batch configuration of scale-adapted SGHMC as baselines (for which we determined a suitable step size via grid search, and only the best performing step size 0.001 is reported).

\begin{figure}
    \centering
    \includegraphics[width=0.9\linewidth]{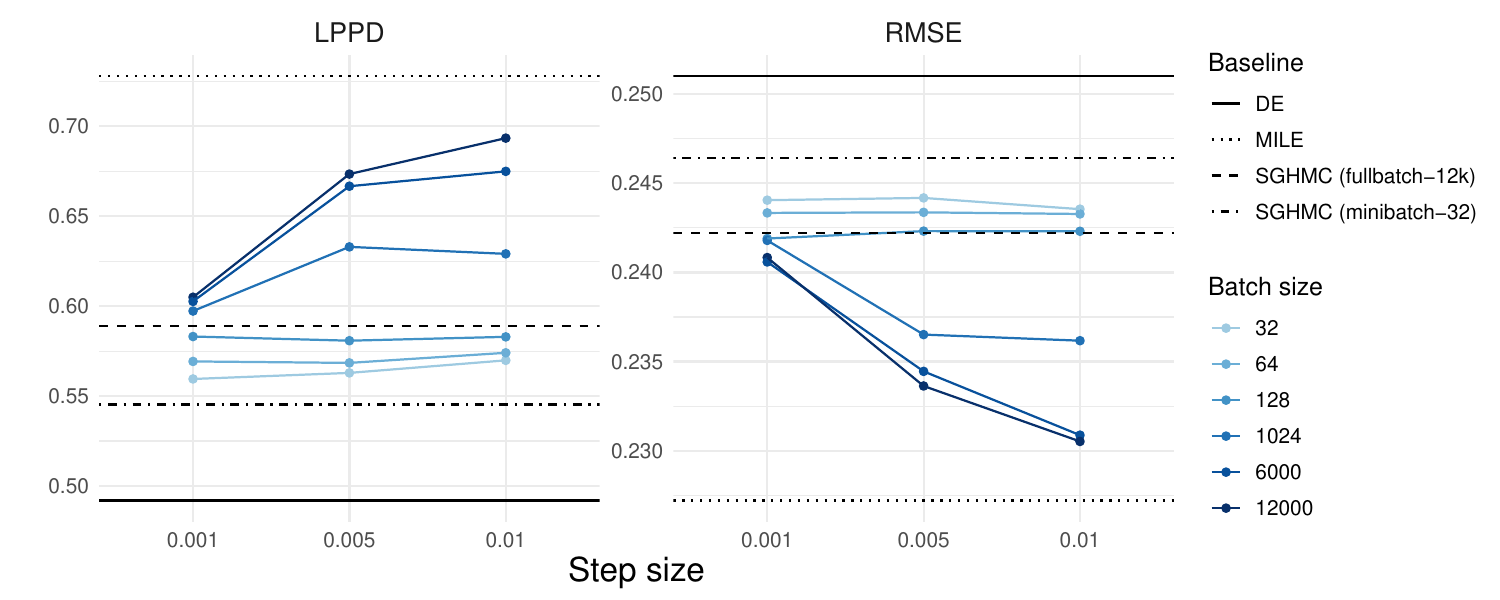}
    \caption{The performance of the \smiley algorithm across various batch and step sizes in comparison with a few baselines on a distributional regression task for the \texttt{bikesharing} dataset. The SGHMC's step size is tuned, and the best performance is displayed.}
    \label{fig:bikebatchstep}
\end{figure}

\subsection{Image Classification (LeNet)}\label{app:lenet} 
For these experiments, we adapt the CNN (v2) architecture from \citet{sommer2025mile}, which is a LeNet type of architecture \citep{lecunlenet5}. We run an ensemble of 8 independent MCMC chains, each configured with 5,000 warmup steps followed by 10,000 sampling steps. Applying a thinning interval of 100 yields 100 samples per chain, for a total of 800 final posterior samples used in the Bayesian model average and evaluation. Our analysis focuses on two key aspects: the empirically superior performance of \psmiley against strong baselines and \smiley (\cref{tab:fmnist_lenet_results}) and the robustness of \smilet to variations in mini-batch size (\cref{fig:fmnist_batchsize_ablation}). While larger batch sizes appear beneficial, the tested configurations suggest the improvements may exhibit diminishing returns. This suggests that SMILE can operate effectively even with moderate batch sizes, preserving much of its computational efficiency.

\begin{figure}[ht]
    \centering
    \begin{minipage}{0.48\linewidth}
        \centering
        \includegraphics[width=\linewidth, trim=2 2 2 2, clip]{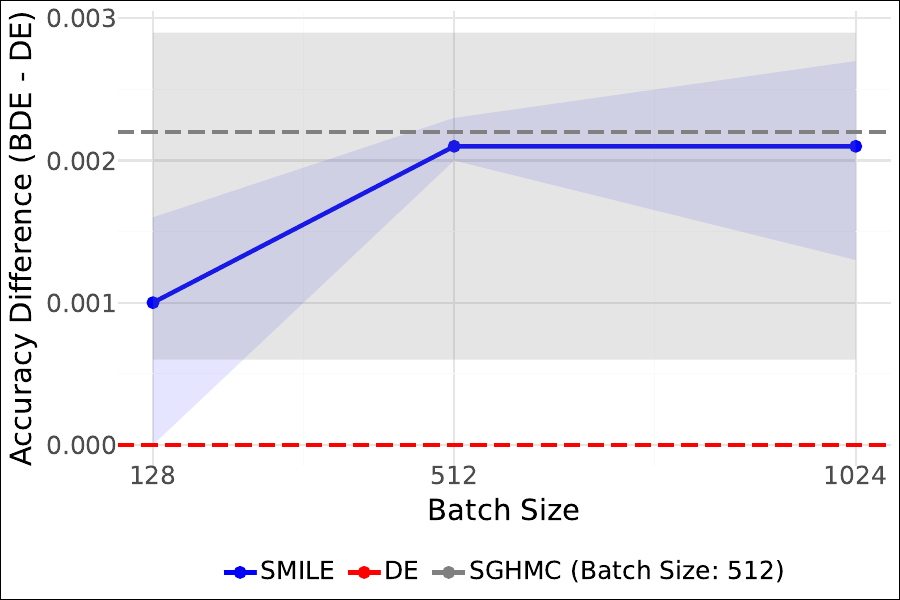}
    \end{minipage}\hfill
    \begin{minipage}{0.48\linewidth}
        \centering
        \includegraphics[width=\linewidth, trim=2 2 2 2, clip]{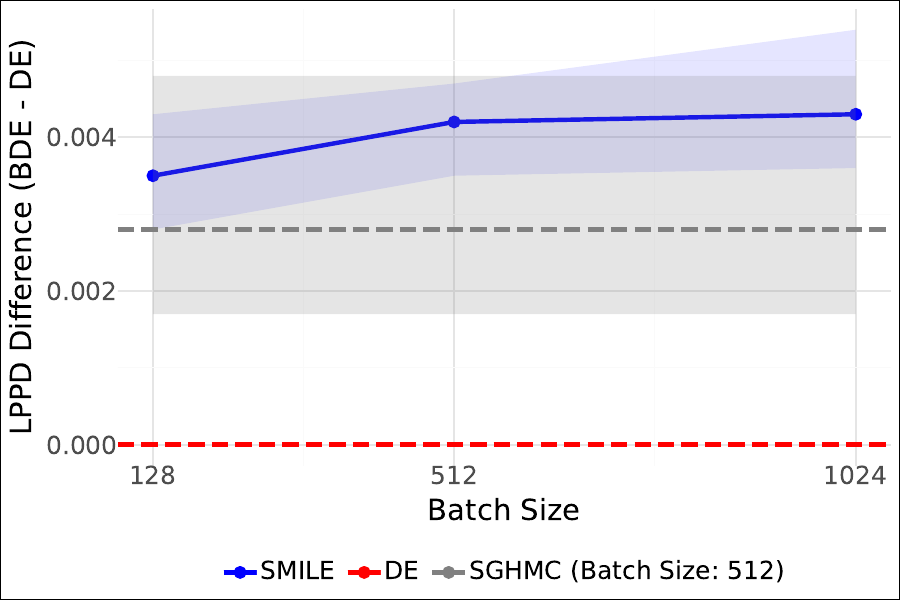}
    \end{minipage}
    \caption{Relative performances with respect to a Deep Ensemble baseline of a Bayesian Deep Ensemble of LeNets (62k parameters) on the Fashion-MNIST dataset using different sampling routines. The shaded areas represent the minimal and maximal performance across three replications for the respective method. For both SGHMC and \smilet, we performed a grid search of suitable step sizes, and for both methods, 0.001 performed best.}
    \label{fig:fmnist_batchsize_ablation}
\end{figure}

\begin{table*}[h!]
\caption{Relative performances with respect to a Deep Ensemble baseline of a Bayesian Deep Ensemble of LeNets (62k parameters) on the Fashion-MNIST dataset using different sampling routines. For all methods, we ablated over the step sizes of $[0.01, 0.001, 0.0001]$ and both for SGHMC and \smilet 0.001 performed best, for \smiley 0.0001 and 0.01 for \psmileynospace. The average performance across 3 replications is reported.}
\label{tab:fmnist_lenet_results}
\vspace{0.1cm}
\centering
\resizebox{0.66\textwidth}{!}{
\begin{tabular}{r|c|c|c|c} & {SGHMC} & {\smileynospace} & {\psmileynospace} & {\smilet} \\
\hline
$\Delta$ Accuracy ($\uparrow$) & $0.0022$ & $-0.0027$ & $\mathbf{0.0082}$ & $0.0021$\\
$\Delta$ LPPD ($\uparrow$) & $0.0028$ & $-0.0362$ & $\mathbf{0.0101}$ &  $0.0042$\\
\end{tabular}
}
\end{table*}

\subsection{Image Classification (ResNet-7)} 
The following details correspond to the results provided in  \cref{fig:resnet9_ablation_main} and \cref{fig:resnet9_ablation}. The architecture is a custom ResNet-7 with 428k parameters and Filter Response Normalization \citep[FRN;][]{singh2020filter} instead of BatchNorm due to the critiques of BatchNorm in combination with sampling \citep{wenzel_2020_HowGooda, shen2024ivon}. Details on the architecture can be found in \cref{tab:arch-ResNet-7}. We use an ensemble of 8 with 5k warmup steps, 10k sampling steps, thinning of 100, batch size 512 (if not indicated otherwise), standard normal isotropic priors, as well as various step sizes, and in the case of \smileynospace, we also try out a cosine decay step size schedule.

\begin{table}[h]
\centering
\caption{The Custom ResNet-7 Architecture with 428k trainable parameters. The output shape is specified for a sample input tensor of size $3 \times 32 \times 32$. All convolutional layers use a $3 \times 3$ kernel, stride 1, and 'SAME' padding, and are followed by Filter Response Normalization \citep[FRN;][]{singh2020filter}.}
\label{tab:arch-ResNet-7}
\resizebox{0.5\textwidth}{!}{
\begin{tabular}{llc}
\toprule
\textbf{Stage} & \textbf{Layer Operation(s)} & \textbf{Filters} \\
\midrule
\texttt{input} & Image & - \\
\midrule
\texttt{stem} & Conv-FRN & 32 \\
\midrule
\texttt{body} & Conv-FRN $\rightarrow$ MaxPool & 64 \\
& Conv-FRN & 64 \\
& Conv-FRN $\rightarrow$ MaxPool & 128 \\
& Conv-FRN $\rightarrow$ MaxPool & 128 \\
\midrule
\texttt{res\_block} & \textit{(Identity Shortcut from previous output)} & 128 \\
& \quad $\hookrightarrow$ Conv-FRN & 128 \\
& \quad $\hookrightarrow$ Add & 128 \\
\midrule
\texttt{head} & Global Average Pool &  - \\
& Fully Connected &  - \\
\bottomrule
\end{tabular}}
\end{table}

\begin{figure}[h]
    \centering
    \begin{minipage}{0.48\linewidth}
        \centering
        \includegraphics[width=\linewidth, trim=2 2 2 2, clip]{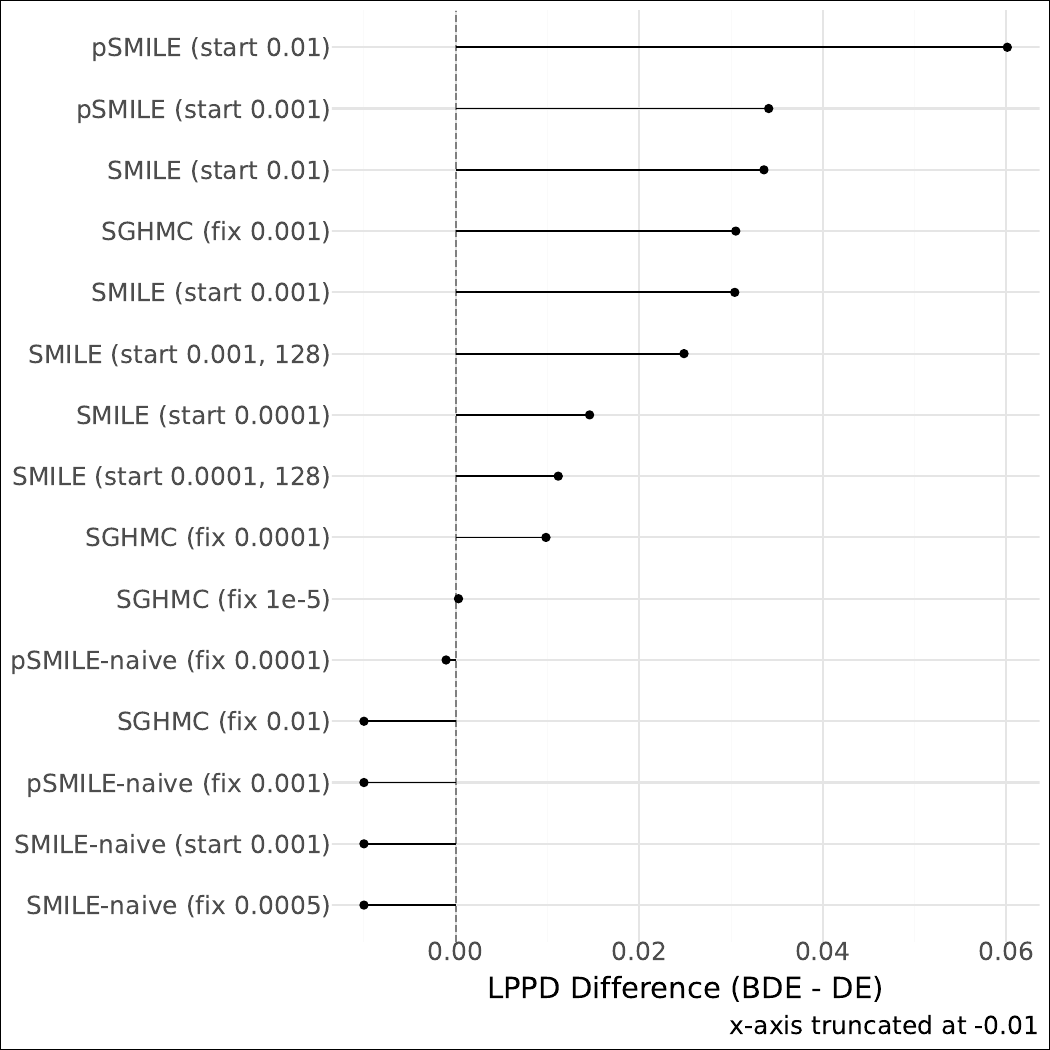}
    \end{minipage}\hfill
    \begin{minipage}{0.48\linewidth}
        \centering
        \includegraphics[width=\linewidth, trim=2 2 2 2, clip]{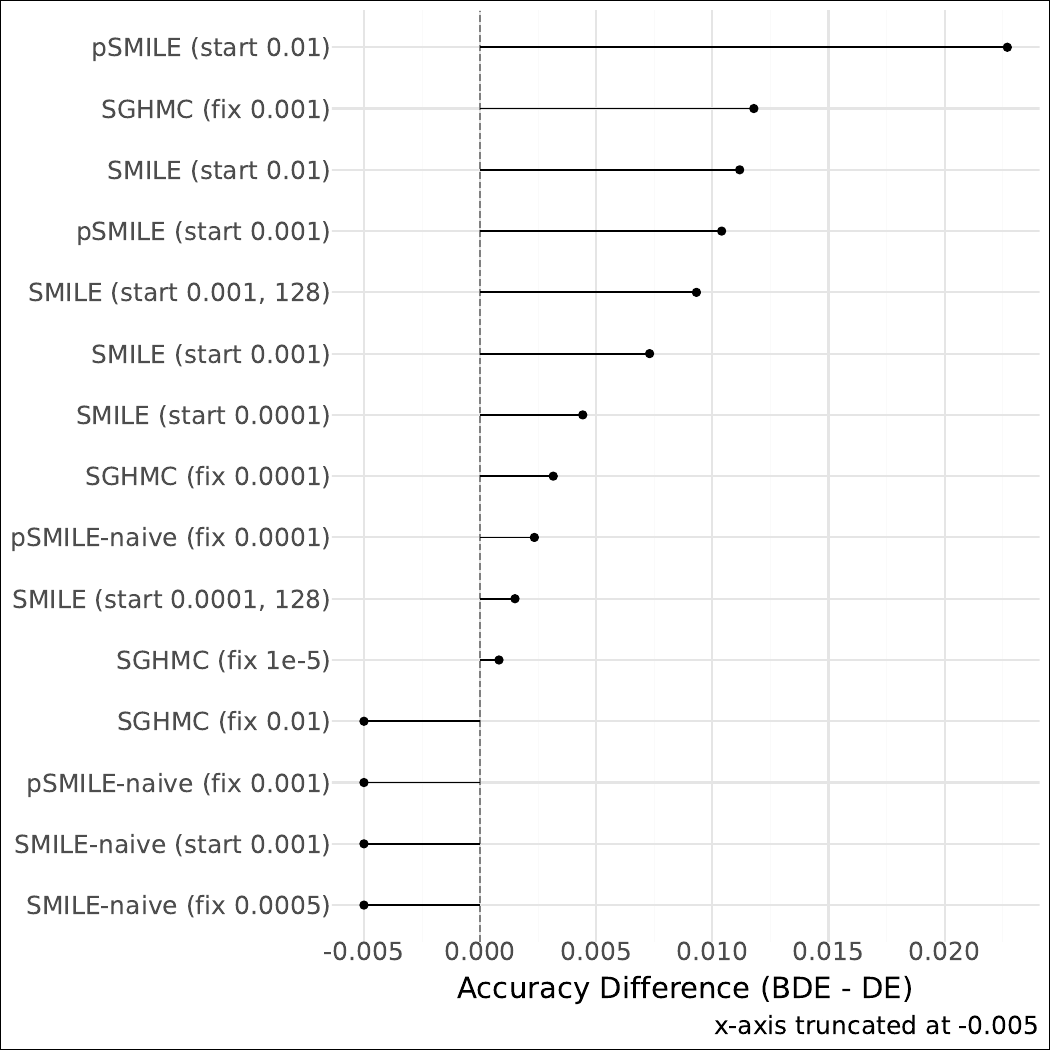}
    \end{minipage}
    \caption{Relative performances difference between different Bayesian deep ensemble approaches and a deep ensemble baseline for a ResNet-7 (428k parameters) on the CIFAR10 dataset. The content of the brackets indicated whether a dynamic schedule with an initial step size or a constant step size schedule was used. If not indicated otherwise in the brackets, a batch size of 512 is employed. In each case, the performance of an ensemble of 8 chains is evaluated. Standard deviations over replications are comparable to those reported for the larger-scale setting reported in \cref{tab:resnet18multi_full}.}
    \label{fig:resnet9_ablation}
\end{figure}

\begin{table}[h]
\centering
\caption{The ResNet-18 Architecture using Filter Response Normalization \citep[FRN;][]{singh2020filter} and 11.2M trainable parameters. The output shape is specified for a sample input tensor of size $3 \times 32 \times 32$. Each residual block (in square brackets) consists of two $3 \times 3$ convolutional layers. The first block of stages 2-4 uses a stride of 2 for downsampling.}
\label{tab:arch-ResNet-18}
\resizebox{0.5\textwidth}{!}{
\begin{tabular}{llc}
\toprule
\textbf{Stage} & \textbf{Layer Operation(s)} & \textbf{Filters} \\
\midrule
\texttt{input} & Image & - \\
\midrule
\texttt{stem} & $3 \times 3$ Conv-FRN $\rightarrow$ MaxPool & 64 \\
\midrule
\texttt{stage1} & $\begin{bmatrix} 3\times3 \text{ Conv-FRN} \\ 3\times3 \text{ Conv-FRN} \end{bmatrix} \times 2$ & 64 \\
\midrule
\texttt{stage2} & $\begin{bmatrix} 3\times3 \text{ Conv-FRN, stride=2} \\ 3\times3 \text{ Conv-FRN} \end{bmatrix} \times 2$ & 128 \\
\midrule
\texttt{stage3} & $\begin{bmatrix} 3\times3 \text{ Conv-FRN, stride=2} \\ 3\times3 \text{ Conv-FRN} \end{bmatrix} \times 2$ & 256 \\
\midrule
\texttt{stage4} & $\begin{bmatrix} 3\times3 \text{ Conv-FRN, stride=2} \\ 3\times3 \text{ Conv-FRN} \end{bmatrix} \times 2$ & 512 \\
\midrule
\texttt{head} & Global Average Pool & - \\
& Fully Connected & - \\
\bottomrule
\end{tabular}}
\end{table}

\subsection{Image Classification (ResNet-18)}\label{app:resnet} The following details correspond to the results provided in  Tables \ref{tab:merged_image_benchmarks}, \ref{tab:resnet18multi_full}, \ref{tab:single_member_metrics_resnet18}, \ref{tab:adaptation_prob} and \ref{tab:reset_prob}. The architecture is a ResNet-18 with 11.2M parameters and Filter Response Normalization \citep[FRN;][]{singh2020filter} instead of BatchNorm due to the critiques of BatchNorm in combination with sampling \citep{wenzel_2020_HowGooda, shen2024ivon} as for the ResNet-7 model. Details on the architecture can be found in \cref{tab:arch-ResNet-18}. We use an ensemble of 8 with 5k warmup steps, 10k sampling steps, thinning of 100, batch size 512, standard normal isotropic priors, as well as a step size of 0.001 and momentum decay 0.05 for scale-adapted SGHMC and a step size of 0.015 for SMILE. For cSGLD, we use a (peak) step size of 0.000001 and a target step size of 0.000000005 for four cycles and an exploration phase ratio of 33\%, which is added to the sampling budget. All step sizes were determined by a shared grid over step sizes of 5-6 orders of magnitude.

\begin{table*}[h]
\caption{Image classification task on CIFAR-10 using a ResNet-18 with 11.2M parameters. Mean accuracy ($\uparrow$) and LPPD ($\uparrow$) results ($\pm$ standard deviation) are reported. All methods employ 8 ensemble members, and further details can be found in \cref{app:resnet}.}
\label{tab:resnet18multi_full}
\centering
\resizebox{\textwidth}{!}
{
\begin{tabular}{l|ccccccc}
\toprule
Method & LPPD ($\uparrow$) & Accuracy ($\uparrow$) & F1 ($\uparrow$) & Brier Score ($\downarrow$) & NLL ($\downarrow$) & AUROC ($\uparrow$) & AURC ($\downarrow$) \\
\midrule
DE & $-0.3026 \pm 0.0056$ & $0.8995 \pm 0.0024$ & $0.8938 \pm 0.0046$ & $0.1545 \pm 0.0042$ & $0.4507 \pm 0.0018$ & $0.9934 \pm 0.0002$ & $0.0161 \pm 0.0007$ \\
cSGLD & $\mathbf{-0.2584 \pm 0.0048}$ & $\mathbf{0.9140 \pm 0.0018}$ & $\mathbf{0.9139 \pm 0.0019}$ & $\mathbf{0.1264 \pm 0.0024}$ & $0.4059 \pm 0.0031$ & $\mathbf{0.9954 \pm 0.0002}$ & $\mathbf{0.0111 \pm 0.0004}$ \\
SGHMC & $-0.2908 \pm 0.0021$ & $0.9104 \pm 0.0015$ & $0.9103 \pm 0.0015$ & $0.1389 \pm 0.0013$ & $0.4942 \pm 0.0060$ & $0.9949 \pm 0.0001$ & $0.0127 \pm 0.0002$ \\
SMILE & $-0.2763 \pm 0.0027$ & $0.9101 \pm 0.0019$ & $0.9100 \pm 0.0019$ & $0.1344 \pm 0.0014$ & $0.4071 \pm 0.0021$ & $0.9951 \pm 0.0001$ & $0.0122 \pm 0.0004$ \\
pSMILE & $\mathbf{-0.2598 \pm 0.0023}$ & $\mathbf{0.9135 \pm 0.0009}$ & $\mathbf{0.9134 \pm 0.0009}$ & $\mathbf{0.1264 \pm 0.0009}$ & $\mathbf{0.3900 \pm 0.0045}$ & $\mathbf{0.9954 \pm 0.0001}$ & $\mathbf{0.0113 \pm 0.0002}$ \\
\bottomrule
\end{tabular}
}
\end{table*}

We further provide two optimization-based baselines in \cref{tab:single_member_metrics_resnet18}, namely IVON \citep{shen2024ivon} and the Laplace approximation. For the Laplace approximation, the \texttt{posteriors} package \citep{duffield2025scalable} is used. We run it for 300 epochs, a learning rate of 0.001, and a weight decay of 0.02. For IVON, we use the defaults of the accompanying code repository and suggestions of \citet{shen2024ivon} with the single Monte Carlo sample configuration.

\begin{table}[t]
\caption{Image classification task on CIFAR-10 using a ResNet-18 with 11.2M parameters. Mean accuracy ($\uparrow$) and LPPD ($\uparrow$) results ($\pm$ standard deviation) are reported. All methods are evaluated as single-mode approximations/chains, i.e., without ensembling across chains/mode approximations.}
\label{tab:single_member_metrics_resnet18}
\centering
\resizebox{0.4\linewidth}{!}
{
\begin{tabular}{l|cc}
\toprule
Method & Accuracy ($\uparrow$) & LPPD ($\uparrow$) \\
\midrule
DNN & $0.8541 \pm 0.0087$ & $-0.4394 \pm 0.0109$ \\
Laplace & $\mathbf{0.8915 \pm 0.0036}$ & $-0.4525 \pm 0.0345$ \\
IVON & $0.8735 \pm 0.0083$ & $-1.6163 \pm 0.0122$ \\
\midrule
cSGLD & $\mathbf{0.8838 \pm 0.0038}$ & $\mathbf{-0.3649 \pm 0.0079}$ \\
SGHMC & $\mathbf{0.8717 \pm 0.0037}$ & $-0.3874 \pm 0.0090$ \\
SMILE & $\mathbf{0.8803 \pm 0.0038}$ & $\mathbf{-0.3541 \pm 0.0121}$ \\
pSMILE & $\mathbf{0.8858 \pm 0.0040}$ & $\mathbf{-0.3549 \pm 0.0184}$ \\
\bottomrule
\end{tabular}}
\end{table}

\begin{table*}[h]
\caption{Ablation on reset quantile $\kappa$ of \smilet comparing predictive and UQ performance metrics for a single replication of the ResNet-18 experimental setup on CIFAR10 of \cref{tab:merged_image_benchmarks}.}
\label{tab:reset_prob}
\centering
\resizebox{0.75\textwidth}{!}
{
\begin{tabular}{l|ccccccc}
\toprule
$\kappa$ & Accuracy & Brier Score & NLL & F1 Score & AUROC & AURC & LPPD \\
\midrule
0.90 & 0.9084 & 0.1328 & 0.3939 & 0.9082 & 0.9950 & 0.0123 & -0.2704 \\
0.95 & \textbf{0.9127} & \textbf{0.1280} & \textbf{0.3931} & \textbf{0.9126} & \textbf{0.9954} & \textbf{0.0112} & \textbf{-0.2614} \\
0.98 & 0.9108 & 0.1375 & 0.4159 & 0.9105 & 0.9948 & 0.0125 & -0.2863 \\
0.99 & 0.9053 & 0.1444 & 0.4449 & 0.9051 & 0.9946 & 0.0132 & -0.3044 \\
\rowcolor{red!10}
No Guardrail & 0.0998 & 0.9384 & 12.3519 & 0.0182 & 0.5528 & 0.8828 & -2.5710 \\\bottomrule
\end{tabular}
}
\end{table*}

\begin{table*}[t]
\caption{Ablation on adaptation probability $a$ of \smilet comparing predictive and UQ performance metrics for a single replication of the ResNet-18 experimental setup on CIFAR10 of \cref{tab:merged_image_benchmarks}.}
\label{tab:adaptation_prob}
\centering
\resizebox{0.7\textwidth}{!}{
\begin{tabular}{l|ccccccc}
\toprule
$a$ & Accuracy & Brier Score & NLL & F1 Score & AUROC & AURC & LPPD \\
\midrule
0.00 & 0.9071 & 0.1475 & 0.4704 & 0.9069 & 0.9947 & 0.0130 & -0.3154 \\
0.05 & 0.9065 & 0.1543 & 0.5144 & 0.9062 & 0.9945 & 0.0134 & -0.3379 \\
0.10 & 0.9054 & 0.1391 & 0.4172 & 0.9053 & 0.9948 & 0.0128 & -0.2881 \\
0.20 & \textbf{0.9096} & \textbf{0.1331} & \textbf{0.3879} & \textbf{0.9096} & \textbf{0.9950} & \textbf{0.0123} & \textbf{-0.2732} \\
0.40 & 0.9046 & 0.1422 & 0.4060 & 0.9046 & 0.9944 & 0.0137 & -0.2941 \\
\bottomrule
\end{tabular}
}
\end{table*}

\subsection{Image Classification (Vision Transformer)}\label{app:vit}

We further extend our evaluation to a different architecture and dataset by training a Vision Transformer \citep{dosovitskiy2021an} with patch size 16, hidden dimension 384, depth 12, 8 heads, and 22M parameters on Imagenette \citep{imagenette}. For this experiment, we employed a batch size of 128 across 4 parallel chains. These results on this even larger scale BNN benchmark (\cref{tab:step_size_ablation_vit}) underscore the competitiveness of \psmile. Based on these results, we can confirm that the gains of \psmile cannot be attributed to step-size tuning or noise conditioning alone. Across both ResNet-18 and ViT settings, cSGLD and SGHMC benefit from the same tuning budget but do not match the performance level of \psmile. The consistent improvements in all considered metrics indicate that the microcanonical formulation contributes materially and cannot be replicated by tuning or conditioning alone. Overall, these additions demonstrate that \psmile constitutes a strong and robust performer relative to established adaptive SGMCMC methods.

\begin{table*}[t]
\caption{Ablation study of step sizes and schedules on Imagenette (ViT, 22M parameters). Results are reported for 4-member ensembles. Fixed (initial) step sizes are used for SGHMC and pSMILE, while cSGLD employs cyclical schedules. Runs of cSGLD with a higher peak step size of, e.g., $5 \cdot 10^{-6}$ and larger diverged catastrophically.}
\label{tab:step_size_ablation_vit}
\centering
\resizebox{\textwidth}{!}
{
\begin{tabular}{l|c|ccccccc}
\toprule
Method & Step Size & LPPD ($\uparrow$) & Brier ($\downarrow$) & F1 ($\uparrow$) & Accuracy ($\uparrow$) & AUROC ($\uparrow$) & AURC ($\downarrow$) & NLL ($\downarrow$) \\
\midrule
DE & - & $-0.9858$ & $0.3367$ & $0.7702$ & $0.7688$ & $0.9691$ & $0.0645$ & $1.4957$ \\
\midrule
cSGLD & $10^{-6} \to 10^{-9}$ (4 cyc) & $-0.7815$ & $0.3325$ & $0.7640$ & $0.7638$ & $0.9645$ & $0.0691$ & $3.4947$ \\
cSGLD & $5 \cdot 10^{-7} \to 10^{-9}$ (4 cyc) & $-0.8122$ & $0.3300$ & $0.7641$ & $0.7641$ & $0.9648$ & $0.0683$ & $2.6650$ \\
cSGLD & $10^{-7} \to 10^{-9}$ (4 cyc) & $-0.9028$ & $0.3420$ & $0.7606$ & $0.7596$ & $0.9651$ & $0.0704$ & $1.6526$ \\
cSGLD & $5 \cdot 10^{-8} \to 10^{-9}$ (4 cyc) & $-0.9374$ & $0.3416$ & $0.7619$ & $0.7604$ & $0.9659$ & $0.0690$ & $1.5850$ \\
\midrule
SGHMC & $0.00100$ & $-1.0202$ & $0.4429$ & $0.7058$ & $0.7065$ & $0.9475$ & $0.1112$ & $6 \cdot 10^{23}$ \\
SGHMC & $0.00050$ & $-0.8769$ & $0.3874$ & $0.7272$ & $0.7268$ & $0.9544$ & $0.0927$ & $2.2832$ \\
SGHMC & $0.00015$ & $-0.7841$ & $0.3440$ & $0.7582$ & $0.7578$ & $0.9625$ & $0.0758$ & $1.2928$ \\
SGHMC & $0.00010$ & $-0.7893$ & $0.3419$ & $0.7549$ & $0.7544$ & $0.9626$ & $0.0755$ & $1.2156$ \\
SGHMC & $0.00005$ & $-0.8239$ & $0.3458$ & $0.7525$ & $0.7518$ & $0.9625$ & $0.0762$ & $1.2218$ \\
\midrule
pSMILE & $0.20$ & $-1.0587$ & $0.4533$ & $0.7030$ & $0.7021$ & $0.9535$ & $0.0930$ & $1.6831$ \\
pSMILE & $0.15$ & $\mathbf{-0.7708}$ & $\mathbf{0.3112}$ & $\mathbf{0.7738}$ & $\mathbf{0.7732}$ & $\mathbf{0.9712}$ & $\mathbf{0.0589}$ & $\mathbf{1.1985}$ \\
pSMILE & $0.10$ & $-0.8929$ & $0.3247$ & $0.7712$ & $0.7708$ & $0.9687$ & $0.0628$ & $1.3728$ \\
pSMILE & $0.05$ & $-0.9375$ & $0.3297$ & $0.7693$ & $0.7685$ & $0.9692$ & $0.0635$ & $1.4352$ \\
\bottomrule
\end{tabular}
}
\end{table*}

\subsection{Language Modeling (NanoGPT)}\label{app:nanogpt}
The following details correspond to the results provided in \cref{tab:nanogptcomp}, \cref{fig:nanogpt_perplexity}, and the qualitative examples in \cref{sec:qualitativegenexamples}. The task is character-level language modeling on the \texttt{modern-shakespeare} dataset. The architecture is a 6-layer, 6-head GPT-style transformer with a context length of 256 and an embedding size of 384, with dropout disabled, adapted from \citet{Karpathy2022}. For model initialization, we employ a warmstart phase, training the model for 30 epochs using the Adam optimizer with decoupled weight decay with a batch size of 64 and a weight decay of 0.05. The learning rate is annealed linearly from $3\mathrm{e}{-4}$ to $2.5\mathrm{e}{-4}$. For the subsequent sampling phase, we use an ensemble of 4 chains with a batch size of 128. We run 200 warmup steps and collect 1000 posterior samples, thinned by a factor of 100. We use standard normal isotropic priors and a grid of the step sizes: $\{2\mathrm{e}{-6}, 2\mathrm{e}{-5}, 2\mathrm{e}{-4}, 2\mathrm{e}{-3}\}$.

\begin{table}[ht]
\centering
\caption{Performance results for NanoGPT (10.8M) trained on \texttt{modern-Shakespeare} using the scale-adapted SGHMC, \psmile and \smilet sampler.}
\label{tab:nanogptcomp}
\resizebox{0.82\textwidth}{!}{
\begin{tabular}{lcccccccc}
\toprule
\multirow{2}{*}{Sampler/} & \multicolumn{4}{c}{Accuracy ($\uparrow$)} & \multicolumn{4}{c}{Perplexity ($\downarrow$)} \\
\cmidrule(lr){2-5} \cmidrule(lr){6-9}
Step size & DNN & BDE(1) & DE(4) & BDE(4) & DNN & BDE(1) & DE(4) & BDE(4) \\
\midrule
\textbf{SGHMC} \\
\midrule
0.000002 & 0.5298 & 0.5337 & 0.5443 & \textbf{0.5495} & 5.0693 & 4.9094 & 4.4855 &  \textbf{4.3710} \\
0.000020 & 0.5290 & 0.5357 & 0.5451 & \textbf{0.5506} & 5.0547 & 4.8741 & 4.4503 &  \textbf{4.3427} \\
0.000200 & 0.5282 & 0.5441 & 0.5441 & \textbf{0.5563} & 5.0717 & 4.7170 & 4.4460 & \textbf{4.2601} \\
0.002000 & 0.5302 & 0.2472 & \textbf{0.5470} & 0.1529 & 5.0227 & 22.0208 & \textbf{4.4194} & 10.9048 \\
\midrule
\textbf{\smilet} \\
\midrule
0.000002 & 0.5289 & 0.5345 & 0.5428 & \textbf{0.5508} & 5.0803 & 4.9154 & 4.4955 & \textbf{4.3724} \\
0.000020 & 0.5296 & 0.5381 & 0.5434 & \textbf{0.5526} & 5.0324 & 4.8393 & 4.4330 & \textbf{4.2980} \\
0.000200 & 0.5296 & 0.5400 & 0.5458 & \textbf{0.5538} & 5.0779 & 4.7828 & 4.4663 & \textbf{4.3045} \\
0.002000 & 0.5309 & 0.4900 & 0.5458 & \textbf{0.5527} & 5.0238 & 5.6973 & \textbf{4.4180} & 4.8501 \\
\midrule
\textbf{\psmile} \\
\midrule
0.000002 & 0.5290 & 0.5343 & 0.5464 & \textbf{0.5496} & 5.0678 & 4.8855 & 4.4744 & \textbf{4.3475} \\
0.000020 & 0.5309 & 0.5376 & 0.5453 & \textbf{0.5531} & 5.0828 & 4.8699 & 4.4928 & \textbf{4.3540} \\
0.000200 & 0.5290 & 0.5407 & 0.5457 & \textbf{0.5562} & 5.0374 & 4.7843 & 4.4507 & \textbf{4.3047} \\
0.002000 & 0.5294 & 0.5318 & 0.5448 & \textbf{0.5530} & 5.0582 & 4.9221 & 4.4653 & \textbf{4.4454} \\
\bottomrule
\end{tabular}}
\end{table}

\subsection{Ablations and Analysis of the Adaptive Tuning}\label{app:tuning}

\paragraph{Tuning Evolution and Goodness of Fit}
Figures \ref{fig:ResNet-18_stepsize_viz} and \ref{fig:nanogpt_stepsize_viz} show that the adaptive step size and the parameters of the fitted Gamma distribution quickly converge to a stable regime. The system automatically finds and maintains a suitable level of energy error, which can differ by orders of magnitude between models. This highlights a key advantage: our method is more user-friendly in the context of BNNs than common MCLMC implementations that require manually setting a target energy error \citep{cabezas2024blackjax}, as the user only needs to provide a reasonable initial step size like the optimizer's learning rate. The visualizations also confirm that the step size remains dynamic, biased toward decay for stability but capable of increasing to facilitate exploration. The effectiveness of this tuning is underpinned by the Gamma distribution's goodness of fit. With a target $\kappa$ of 0.02, the empirical reset frequency measured over tens of thousands of update steps for the NanoGPT model was $0.02314 \pm 0.00626$ and for the ResNet-18 model was $0.01778 \pm 0.00086$. This close alignment with the target confirms that the Gamma distribution provides a robust working model for the energy error.

\paragraph{Hyperparameter Robustness Analyses} Both \cref{tab:adaptation_prob} and \cref{tab:reset_prob} confirm that the energy-variance-based tuning works robustly for a range of meaningful settings of the two core hyperparameters of \cref{alg:smilet_g} $\kappa$ and adaptation probability $a$, with optimal performance achieved with moderate guardrailing and adaptation of the step size, even indicating that the strong performance of \smilet reported in \cref{tab:merged_image_benchmarks} can be further improved. Notably, \cref{tab:reset_prob} clearly highlights the necessity to put the guardrail via $\kappa$ into place, as without this important component, the naive sampling completely diverges and performs poorly.

\paragraph{A Practical Tuning Recipe}
For practitioners seeking to deploy (p)SMILE, we recommend the following simple recipe:
\begin{enumerate}[label=\arabic*), leftmargin=*, itemsep=2pt, parsep=0pt]
    \item Retain all hyperparameters except for the initial step size at their default values.
    \item Set the initial step size close to the optimizer's final learning rate from the warm-start phase.
    \item Evaluate a coarse step-size grid spanning 3 to 4 orders of magnitude. Empirically, even very short exploratory chains provide a reliable indication of stability and performance.
    \item (Optional Refinement) Fine-tune the step size within the best-performing coarse region for a further performance boost.
\end{enumerate}
Following this tuning, a rule of thumb that worked well in many tasks is to allocate approximately half of the initial optimization computational budget to the MCMC sampling phase. For a discussion on inference time considerations, see \citet{sommer2026position}.

\paragraph{Parameter Interactions:} It is worth noting that the tuning parameters are not strictly independent. For instance, a larger initial step size naturally induces more extreme energy errors, thereby increasing the relevance of the numerical guardrail threshold ($\kappa$). However, as demonstrated by our ablation studies, the sampler maintains robust performance across a wide range of settings.

\begin{figure}[ht]
    \centering
    \begin{subfigure}{0.9\linewidth}
        \centering
        \includegraphics[width=\linewidth, trim=2 2 2 2, clip]{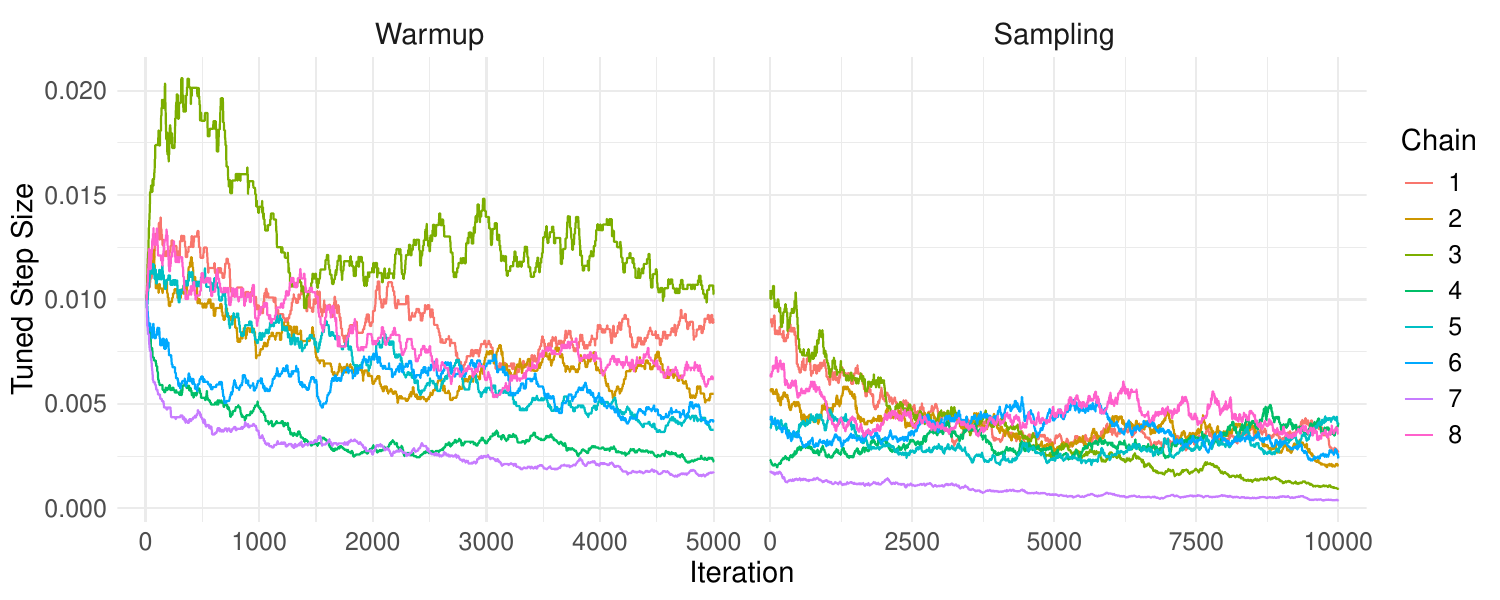}
        \caption{Evolution of the step sizes over time for the \smilet sampler.}
    \end{subfigure}
    
    \vspace{0.5em}
    
    \begin{subfigure}{0.9\linewidth}
        \centering
        \includegraphics[width=\linewidth, trim=2 2 2 2, clip]{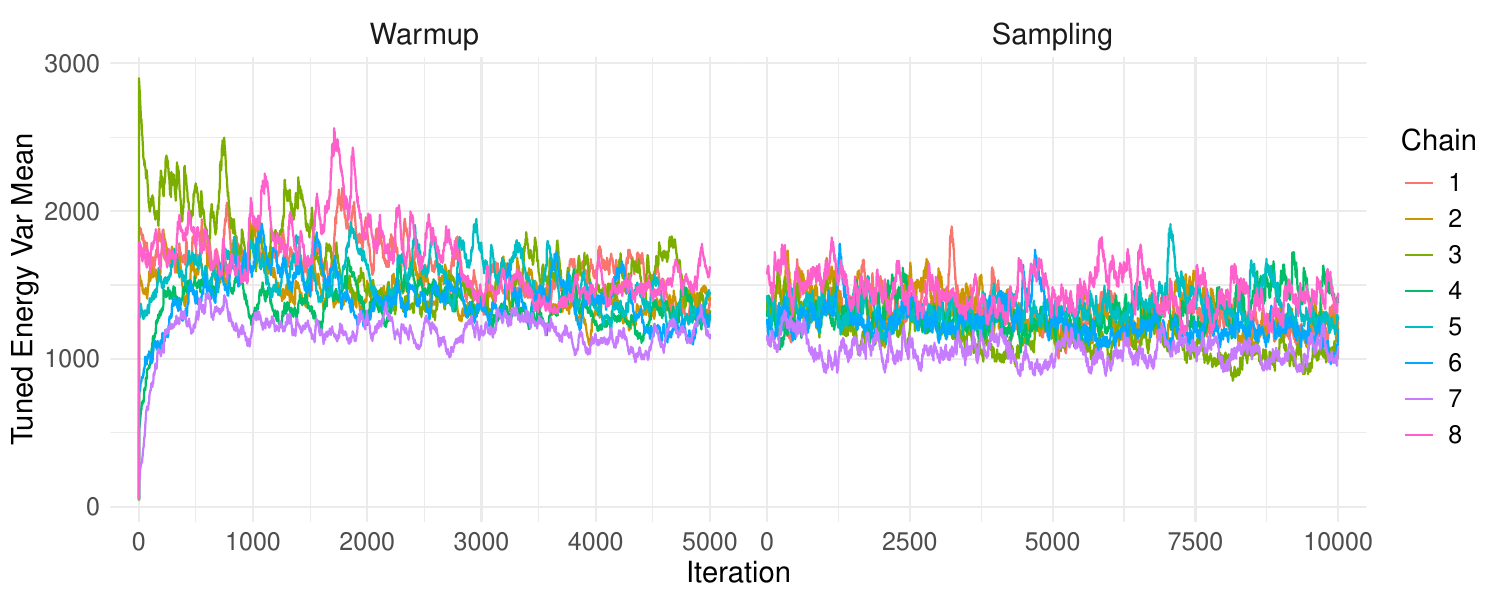}
        \caption{Evolution of $\Delta E$ over time for the \smilet sampler.}
    \end{subfigure}
    
    \vspace{0.5em}
    
    \begin{subfigure}{0.9\linewidth}
        \centering
        \includegraphics[width=\linewidth, trim=2 2 2 2, clip]{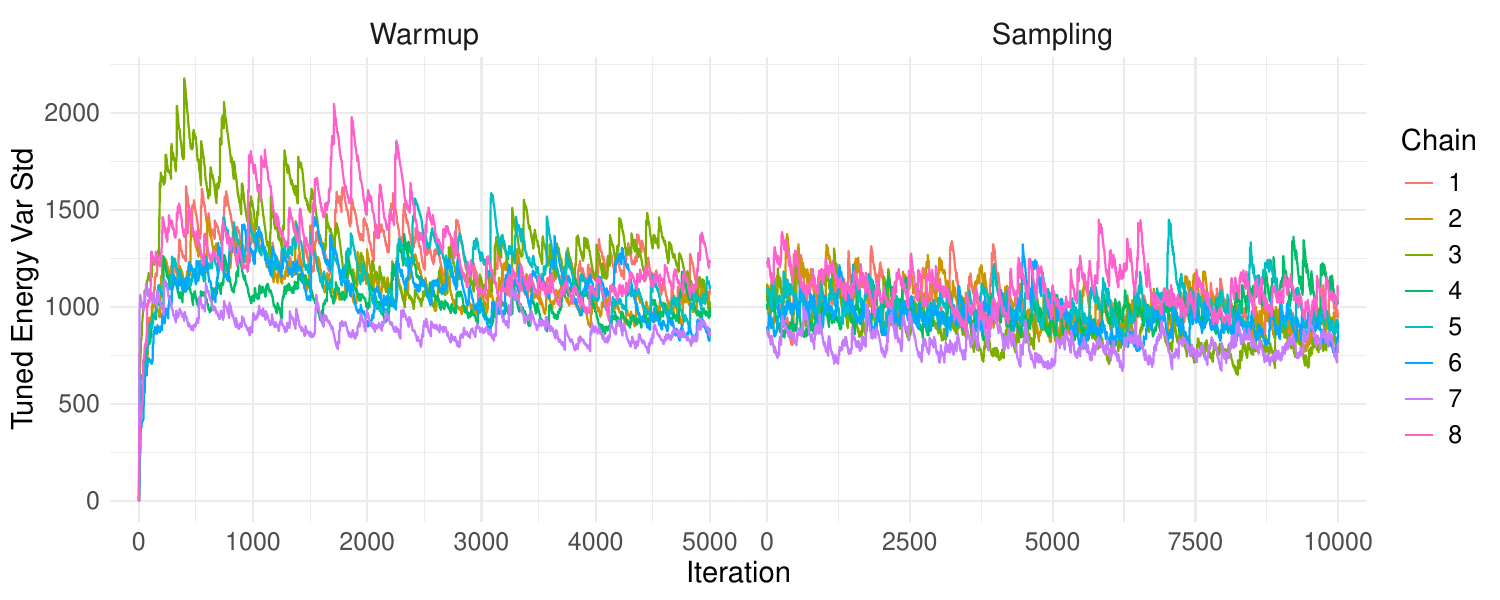}
        \caption{Evolution of $SD(\Delta E)$ over time for the \smilet sampler.}
    \end{subfigure}
    
    \caption{Evolution of the key adapted quantities over time during the sampling of a ResNet-18 via \smilet, respectively, for 8 independent chains. The evolutions are depicted for the \smilet model in \cref{tab:merged_image_benchmarks}.}
    \label{fig:ResNet-18_stepsize_viz}
\end{figure}

\begin{figure}[ht]
    \centering
    \begin{subfigure}{0.9\linewidth}
        \centering
        \includegraphics[width=\linewidth, trim=2 2 2 2, clip]{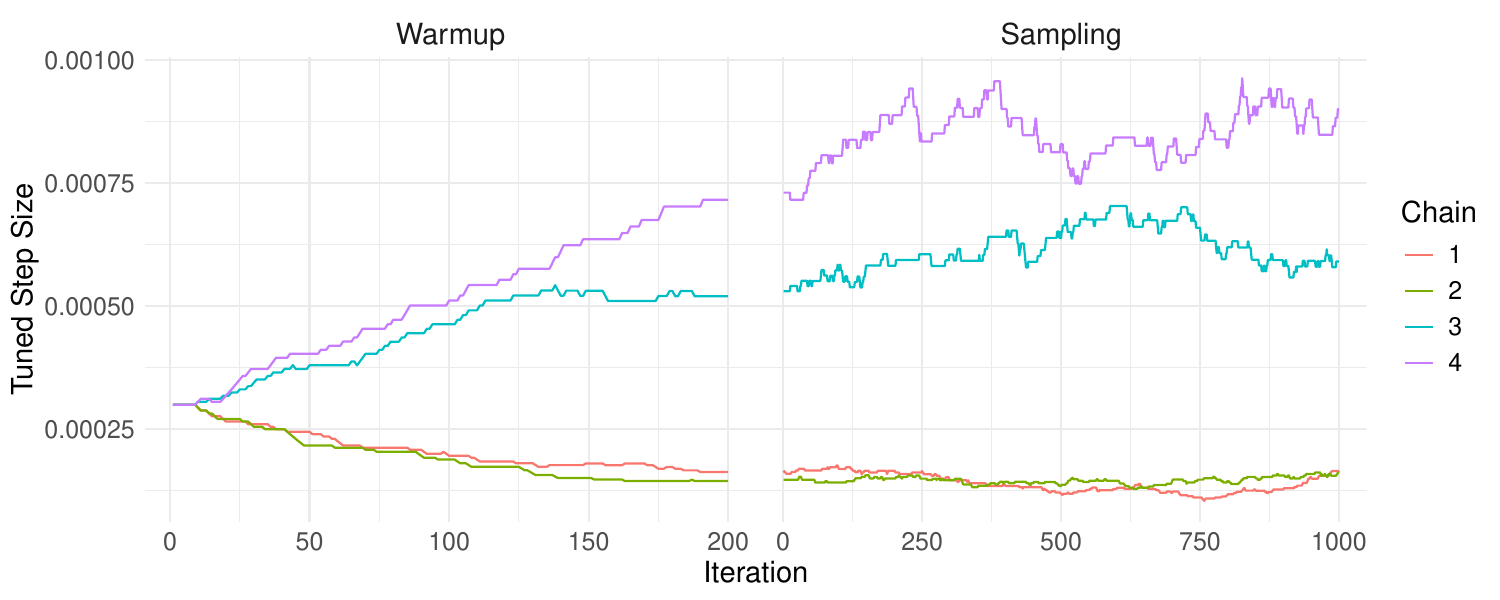}
        \caption{Evolution of the step sizes over time for the \smilet sampler.}
    \end{subfigure}
    
    \vspace{0.5em}
    
    \begin{subfigure}{0.9\linewidth}
        \centering
        \includegraphics[width=\linewidth, trim=2 2 2 2, clip]{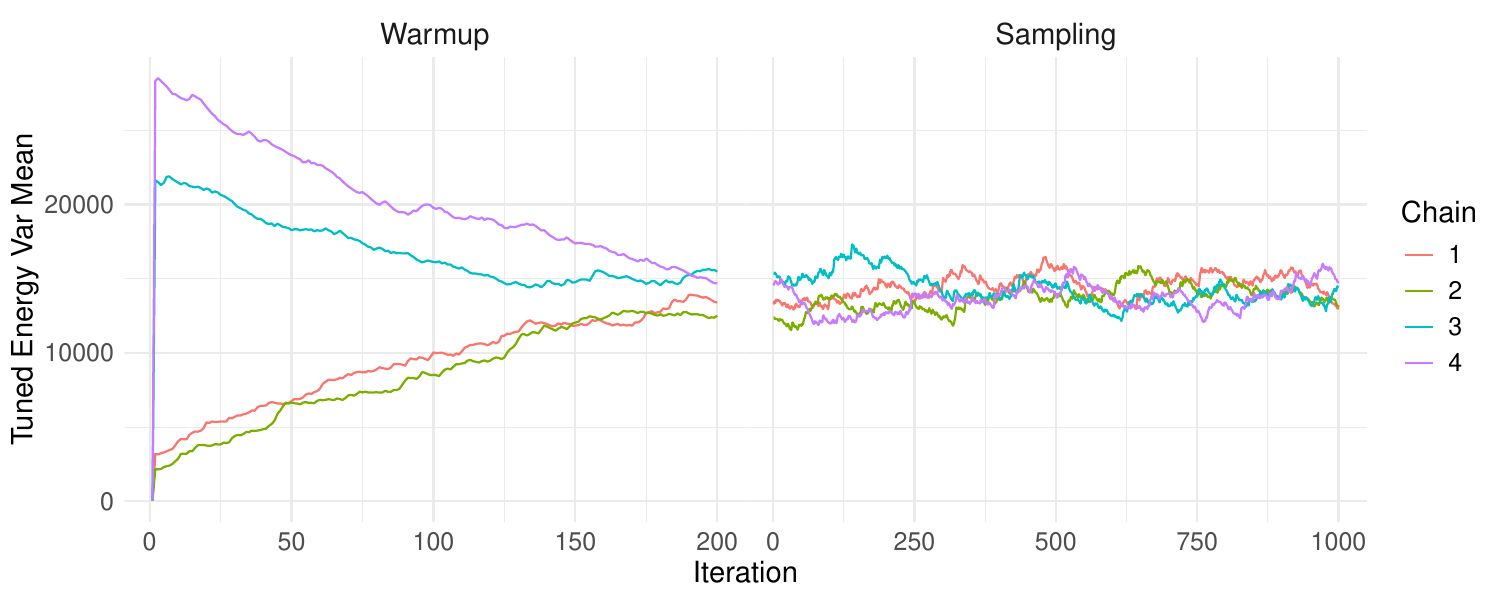}
        \caption{Evolution of $\Delta E$ over time for the \smilet sampler.}
    \end{subfigure}
    
    \vspace{0.5em}
    
    \begin{subfigure}{0.9\linewidth}
        \centering
        \includegraphics[width=\linewidth, trim=2 2 2 2, clip]{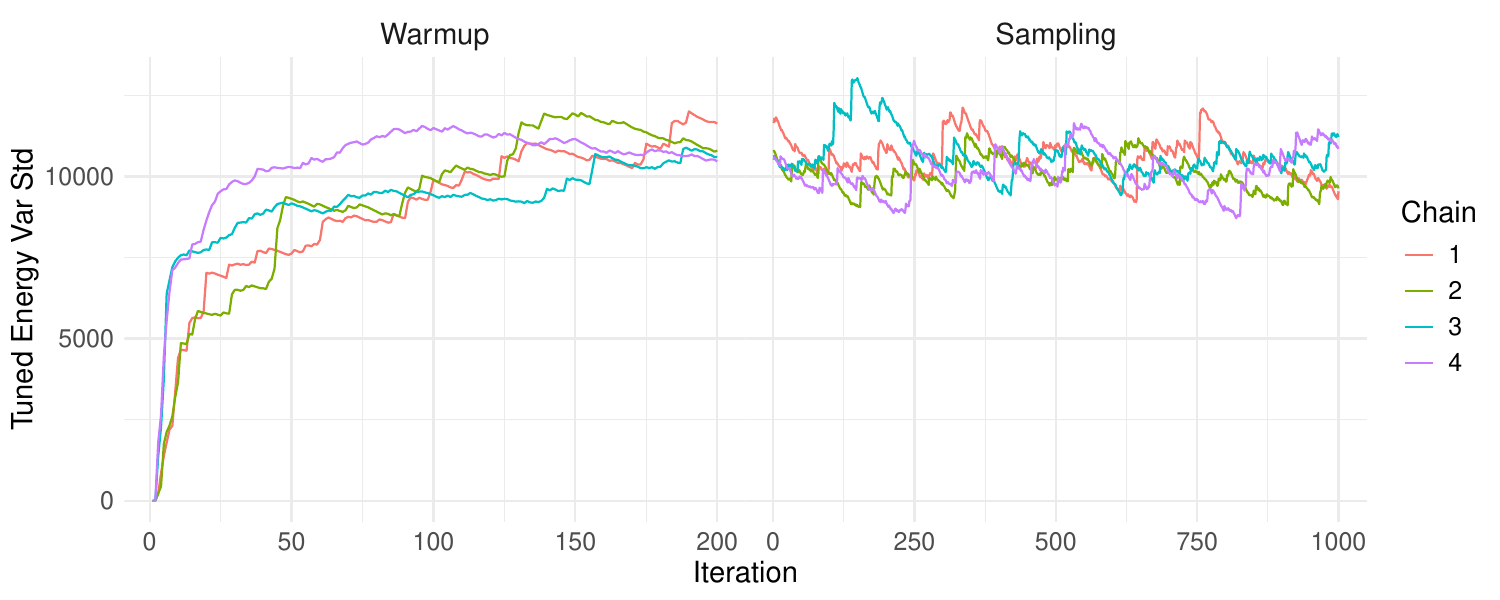}
        \caption{Evolution of $SD(\Delta E)$ over time for the \smilet sampler.}
    \end{subfigure}
    
    \caption{Evolution of the key adapted quantities over time during the sampling of NanoGPT via \smilet, respectively, for 4 independent chains. The evolutions are depicted for the best performing \smilet model in \cref{tab:nanogptcomp}.}
    \label{fig:nanogpt_stepsize_viz}
\end{figure}

\clearpage
\section{Supplementary Information}

\subsection{Further Related Work on adaptive SGMCMC}\label{sec:relatedadaSGMCMC}

Apart from scale-adapted SGHMC \citep{adasghmc}, various other adaptive SGMCMC approaches have been proposed in recent years. These include various stochastic gradient (Langevin) samplers which mimic pre-conditioned, momentum-driven, or adaptive optimizer behavior, such as Stochastic Gradient Fisher Scoring \citep{Ahn_fisher}, Preconditioned Stochastic Gradient Langevin Dynamics \citep{li2016preconditioned}, Momentum Stochastic Gradient Langevin Dynamics \citep{Kim_momentumsgld}, or Cyclical SGLD \citep[cSGLD;][]{zhang_2020_CyclicalStochastic}, methods that automatically tune the hyperparameters of SGMCMC, such as Multi-Armed MCMC Bandit Algorithm \citep{coullon2023banditmcmc}, and, most recently, Parameter Extended SGMCMC \citep{kim2025parameter} to improve SGMCMC’s sample diversity and out-of-distribution performance.

\subsection{Conceptual Similarity with Cyclical SGLD}
The resulting step size schedule of \cref{alg:smilet_g} also shares conceptual similarities with cyclical SGLD \citep[cSGLD][]{zhang_2020_CyclicalStochastic}: both introduce systematic variation of the step size to balance exploration and exploitation. However, our approach is stochastic rather than deterministic, producing diverse trajectories across chains, and decays only in expectation. This stochasticity enhances robustness and supports broader posterior exploration, while on average reproducing the empirically well-working decaying schedule of cSGLD in high-dimensional SGMCMC sampling. Further, as we rely on parallel ensembles, we effectively parallelize the sequential exploration and exploitation cycles of cSGLD and therefore increase effectiveness.

\subsection{Integrator Implementation Details}\label{app:integrator}

We implement the second-order minimal-norm Omelyan/McLachlan integrator \citep{omelyan_optimized_2002, omelyan_symplectic_2003}, as used in the original MCLMC work. We use the standard 5-stage symmetric (P-Q-P-Q-P) scheme with coefficients: $b_1 = 0.1931833275037836$, $a_1 = 0.5$, $b_2 = 1 - 2b_1$.

This scheme splits the deterministic step into position updates (Q) and momentum updates (P). The position update (P) is a standard linear step, integrated for a time $\epsilon = a_1 \Delta t$:
$$
\bm\theta \leftarrow \bm\theta + (a_1 \Delta t)\,\bm\Pi
$$
For a momentum update of time $\epsilon$ (e.g., $b_1 \Delta t$ or $b_2 \Delta t$), the update $\bm\Pi \leftarrow B_\epsilon$ is given by:
$$
B_\epsilon = \frac{\bm\Pi + (\sinh \delta + \bm{e}^\top \bm\Pi(\cosh \delta - 1)) \bm{e}}{\cosh \delta + \bm{e}^\top \bm\Pi \sinh \delta}
$$
where the auxiliary variables are defined using the gradient $g(\bm\theta)=\nabla \log p(\boldsymbol{\bm\theta} \vert \mathcal{D})$ and dimensionality $d$:
\begin{itemize}
    \item $\bm{e} = - g(\bm\theta) / \|g(\bm\theta)\|$
    \item $\delta = \epsilon \|g(\bm\theta)\| / (d-1)$
\end{itemize}

A full deterministic step of size $\Delta t$ performs the following five operations in sequence:
\begin{enumerate}
    \item P-step: $\bm\Pi \leftarrow B_\epsilon$ using $\epsilon = b_1 \Delta t$
    \item Q-step: $\bm\theta \leftarrow \bm\theta + (a_1 \Delta t)\,\bm\Pi$
    \item P-step: $\bm\Pi \leftarrow B_\epsilon$ using $\epsilon = b_2 \Delta t$
    \item Q-step: $\bm\theta \leftarrow \bm\theta + (a_1 \Delta t)\,\bm\Pi$
    \item P-step: $\bm\Pi \leftarrow B_\epsilon$ using $\epsilon = b_1 \Delta t$
\end{enumerate}

Because MCLMC evolves the unit momentum direction $u=\bm\Pi/\|\bm\Pi\|$, this deterministic update is wrapped in a symmetric Strang splitting. Each full step begins and ends with a half stochastic/tangent update followed by normalization and reassignment.

This yields a reversible deterministic core with correctly projected stochastic evolution as shown in \citep{robnik2023microcanonical}.

\subsection{Gamma Quantile Approximation}\label{app:wilsonhilferty}

We utilize the Wilson-Hilferty approximation for the efficient computation of the Gamma inverse cumulative distribution function. 

Following \citet{wilson1931distribution}, given a probability $p$ and the standard normal quantile $z = \Phi^{-1}(p)$, the approximation is calculated as:
$$
\mathcal{G}a^{-1}(p; \gamma^{(t)}_{\texttt{shape}}, \gamma^{(t)}_{\texttt{scale}}) = \gamma^{(t)}_{\texttt{scale}} \, \gamma^{(t)}_{\texttt{shape}} \left( 1 - \frac{1}{9 \gamma^{(t)}_{\texttt{shape}}} + \frac{z}{3 \sqrt{\gamma^{(t)}_{\texttt{shape}}}} \right)^3
$$

\subsection{Runtimes}
A brief note on computational cost. For the UCI benchmark (\cref{tab:uci_benchmark}), the sampling runtime is on the same order of magnitude as the initial optimization time (with the only exception being the epoch-wise sampling, which is discussed above), a characteristic analyzed in \citet{sommer2025mile}. For our larger-scale experiments, we find that this practical feasibility holds. Sampling with the NanoGPT model took approximately 2.6 times longer than optimization. For the ResNet-18 on CIFAR-10, the sampling phase was only about 10\% more costly than optimization. In both cases, the additional runtime invested into sampling is not excessive as in other works on sampling-based inference for BNNs \citep{izmailov_2021_WhatArea}, placing it in the same ballpark as optimization-based training while consistently yielding superior performance.

Further, in all considered BNN cases, the runtime of each step is dominated by the gradient calculation (via backpropagation). Therefore, given that the same integrator is used, there is no computational difference in practice. Obviously, SMILE requires less memory than pSMILE due to the preconditioning, but the same holds when comparing plain and preconditioned SGHMC.

\clearpage
\subsection{Qualitative Generation Examples}\label{sec:qualitativegenexamples}

Below, we provide an example of a qualitative generation from one of the many fitted nanoGPT instances, demonstrating that it has learned meaningful syntactic and semantic structures despite the small vocabulary size (SingleChar Tokenizer).  

\noindent\textbf{Model configuration:}  
\texttt{vocab\_size = 65, context\_len = 256, emb\_size = 384, n\_blocks = 6, n\_heads = 6}

\begin{figure}[h]
\centering
\begin{minipage}{0.9\linewidth}
\ttfamily
\textbf{Prompt I:} \textquotesingle KIN\textquotesingle

\vspace{1ex}
\textbf{Generated Text:}

KING EDWARD IV: \\
No; that must I do well it mildle with of what \\
hors all my brother Angelo. This grated is bourned \\
to the cowdich to save my heavinous, and lords, \\
whose I clim by your kindrow's doom! \\

\vspace{1ex}
DUKE OF AUMERLE: \\
As every me. Duke of confidens bo my oweard. \\
I have smo not raitful and with the hour
\end{minipage}

\vspace{0.5cm}
\begin{minipage}{0.9\linewidth}
\ttfamily
\textbf{Prompt II:} \textquotesingle Now is your time. Speak loud and kneel before him.\textquotesingle

\vspace{1ex}
\textbf{Generated Text:}

Now is your time. Speak loud and kneel before him.

\vspace{1ex}
DUCHESS OF YORK:\\
My lord come him.\\
Have your an mistivon! What do you fremaid?\\

\vspace{1ex}
DUKE OF YORK:\\
Up the matter of the fight givive watch gentle Brotheren.\\
If you shall sham so my southern in or prevents;\\
some that find which coulding of the fast.\\

\vspace{1ex}
DUCHES OF YORK:\\
I have done the lords; and what often
\end{minipage}

\caption{Example texts generated by a trained nanoGPT instance (DNN) of \cref{tab:nanogptcomp} given the respective prompt and generating 300 new tokens autoregressively via sampling from the predicted Categorical distribution over the vocabulary. The model produces coherent theatrical formatting, consistent character names, and dialogue structure, indicating learned domain-specific patterns.}
\end{figure}

\end{document}